\newcommand{\dataset}{X}
\newcommand{\tree}{{H}}
\newcommand{\powerset}{\mathbb{P}}
\newcommand{\trellis}{\mathbb{T}} %\Tcal
\newcommand{\trellisVertex}{\mathbb{V}}
\newcommand{\dasguptaGraph}{\mathcal{W}}
\newcommand{\astar}{A*\xspace}
\newcommand{\frontier}{\ensuremath{\Pi}}
\newcommand{\maxsplit}{\ensuremath{\Phi}}
\newcommand{\argmaxsplit}{\ensuremath{\Xi}}
\newcommand{\lchild}{\ensuremath{\ell}}
\newcommand{\rchild}{\ensuremath{r}}
\newcommand{\children}{\mathbb{C}}
\newcommand{\leaves}{\textsf{lvs}}
\newcommand{\EfunS}{\ensuremath{\psi}}
\newcommand{\EfunT}{\ensuremath{\phi}}
\newcommand{\ch}{\ensuremath{\textsc{ch}}}
\newcommand{\Hfun}{\mathbb{H}}
\newcommand{\alltrees}{\ensuremath{\mathbb{H}}}
\newcommand{\partialtree}{\ensuremath{\Breve{H}}}
\newcommand{\hone}{$\textsf{h1}(\cdot)$\xspace}
\newcommand{\hzero}{$\textsf{h0}(\cdot)$\xspace}
\def\beq{\begin{equation}}
\def\eeq{\end{equation}}
\newcommand{\bea}{\begin{eqnarray}\begin{aligned}}
\newcommand{\eea}{\end{aligned}\end{eqnarray}}
\title{Exact and Approximate Hierarchical Clustering Using A*}
\begin{document}
\vspace{1cm}
\author[1]{Craig S. Greenberg\thanks{The three authors contributed equally to this work.}$^,$}
\author[2]{Sebastian Macaluso$^{\ast, }$}
\author[3]{Nicholas Monath$^{\ast, }$}
\author[4]{\\ Avinava Dubey}
\author[5]{Patrick  Flaherty}
\author[4]{Manzil Zaheer}
\author[4]{Amr Ahmed} 
\author[2]{\\ Kyle Cranmer}
\author[3]{Andrew McCallum}

\date{} 
\affil[1]{\small{National Institute of Standards and Technology, USA}}
\affil[2]{\small{Center for Cosmology and Particle Physics \& Center for Data Science, New York University, USA}}
\affil[3]{\small{College of Information and Computer Sciences, University of Massachusetts Amherst, USA}}
\affil[4]{\small{Google Research, Mountain View, CA, USA}}
\affil[5]{\small{Department of Mathematics and Statistics, University of Massachusetts Amherst, USA}}
\providecommand{\keywords}[1]{\textit{Keywords:} #1}

\maketitle

\begin{abstract}
Hierarchical clustering is a critical task in numerous domains. 
Many approaches are based on heuristics and the properties of the resulting clusterings are studied post hoc. 
However, in several applications, there is a natural cost function that can be used to characterize the quality of the clustering. 
In those cases, hierarchical clustering can be seen as a combinatorial optimization problem.
To that end, we introduce a new approach based on \astar search. We overcome the prohibitively large search space by combining \astar with  a novel \emph{trellis} data structure. 
This combination results in an exact algorithm that scales  beyond previous state of the art, from a search space with $10^{12}$ trees to $10^{15}$ trees, and an approximate algorithm that improves over baselines, even in enormous search spaces that contain more than $10^{1000}$ trees.
We empirically demonstrate that our method achieves substantially higher quality results than  baselines for a particle physics use case and other clustering benchmarks. 
We describe how our method provides significantly improved theoretical bounds on the time and space complexity of A* for clustering. 
\end{abstract}
\section{Introduction}
Hierarchical clustering has been applied in a wide variety of settings such as scientific discovery \cite{sorlie2001gene}, personalization \cite{zhang2014taxonomy}, entity resolution for knowledge-bases \cite{green2012entity,levin2012citation,vashishth2018cesi}, and jet physics~\cite{ Cacciari:2008gp,Catani:1993hr,Dokshitzer:1997in, Ellis:1993tq}. While much work has focused on approximation methods for relatively large datasets \cite{bateni2017affinity,fichtenberger2013bico,garg2006pbirch,monath2019scalable,naumov2020objective,zhang1997birch, dubey2014,hu2015large,monath2020scalable,dubey2020distributed,monath2021dag}, there are also important use cases of hierarchical clustering that demand
exact or high-quality approximations \cite{greenberg2020compact}.  
This paper focuses on one such application of hierarchical clustering in jet physics: inferring \emph{jet structures}.

\textbf{Jet Physics:}
Particle collisions in collider experiments, such as the Large Hadron Collider (LHC) at CERN, produce new particles that go through a series of successive binary splittings, termed a {\it showering process}, which finally leads to a {\it jet}: a collimated set of stable-charged and neutral particles that are measured by a detector. 
A schematic representation of this process is shown in Figure \ref{fig:particleexample}, where the jet constituents are the (observed) leaf particles in solid dark blue  and the intermediate (latent) particles can be identified as internal nodes of the hierarchical clustering.
\begin{figure}
    \centering
    \includegraphics[width=0.55\textwidth]{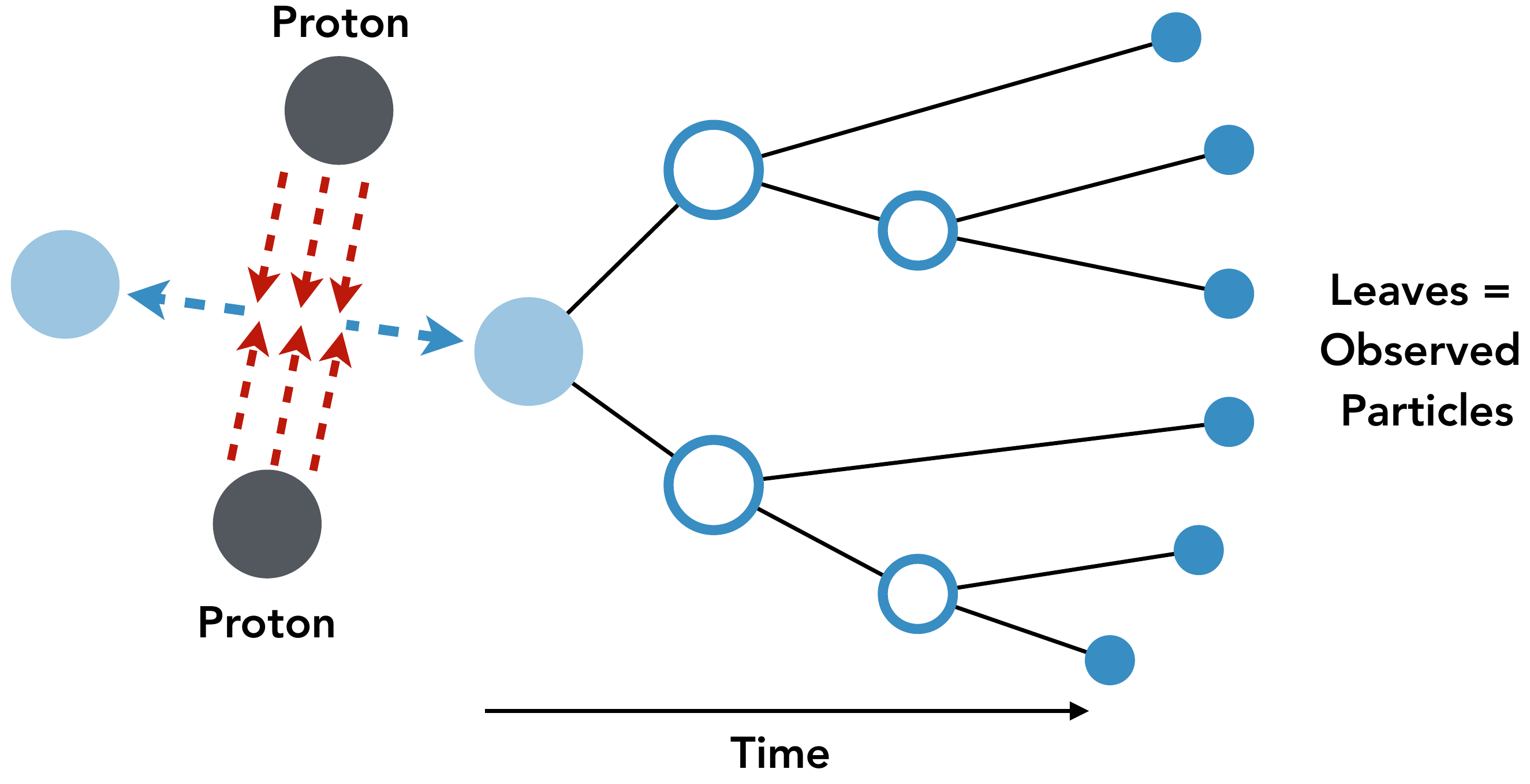}
    \caption{\small{\textbf{Jets as hierarchical clustering}. Schematic representation of a process producing a jet at the Large Hadron Collider at CERN. Constituents of the incoming protons interact and produce two new particles (solid light blue). Each of them goes through successive binary splittings until giving rise to stable final state particles (solid dark blue), which hit the detector. These final state particles (leaves of a binary tree) form a jet, and each possible hierarchical clustering represents a different splitting history. } 
    \label{fig:particleexample}}
\end{figure}
There has been a significant amount of research into jet clustering over the decades, i.e. reconstructing the showering process (hierarchical clustering) from the observed jet constituents (leaves). The performance of these algorithms is often the bottleneck to precision measurements of the Standard Model of particle physics and searches for new physics. 
The LHC generates enormous datasets, but for each recorded collision there is a hierarchical clustering task  with typically 10 to 100 particles (leaves). Despite the relatively small number of elements in these applications, exhaustive solutions are intractable, and current exact methods, e.g., \cite{greenberg2020compact}, have limited scalability.
The industry standard uses agglomerative clustering techniques, which are greedy \cite{Cacciari:2011ma}. 
Thus, they tend to find low-likelihood hierarchical clusterings, which results in a poor inference of the properties of the initial state particles (solid light blue circles in Figure \ref{fig:particleexample}). As a result, their type could be misidentified (e.g., top quark, $W$ boson, gluon, etc.). 
Also, the success of deep learning jet classifiers \cite{Kasieczka:2019dbj} (for these initial state particles) that do not use clustering information is evidence for the deficiencies of current clustering algorithms since the {\it right} clustering would mean that traditional clustering-based physics observables would be essentially optimal in terms of classification performance.
Thus, exact and high-quality approximations in this context would be highly beneficial for data analyses in experimental particle physics.

Traditional algorithms, such as greedy approximation methods can 
converge to local optima,
and have little ability to re-consider possible alternatives encountered at previous steps in their search for the optimal clustering. This naturally raises the question of how well-established, \emph{non-greedy} search algorithms, such as \astar, can be applied to hierarchical clustering.

An immediate challenge in using \astar to search for the optimal tree structure for a given objective is the vast size of the space of hierarchical clusterings.
There are \emph{many} possible hierarchical clusterings for $n$ elements, specifically $(2n-3)!!$, where $!!$ indicates double factorial. 
A na\"ive application of \astar would require super-exponential time and space.
Indeed, the only previous work exploring the use of \astar for clustering, \cite{daume2007fast},  uses \astar to find MAP solutions to Dirichlet Process Mixture Models and has no bounds on the time and space complexity of the algorithm.

\subsection{Theoretical contributions of this paper}
\begin{itemize}
    \item \textbf{\astar Datastructure}. Inspired by \cite{greenberg2018compact,greenberg2020compact}, we present a data structure to compactly encode the state space and objective value of hierarchical clusterings for search, using a sequence of nested min-heaps (\S \ref{sec:astar_on_trellis}). 
    \item \textbf{Time \& Space Bounds}.  Using this structure, we are able to bound the running time and space complexity of using \astar to search for clusterings, an improvement over previous work (\S \ref{sec:astar_on_trellis}). 
\end{itemize}

\subsection{Empirical contributions of this paper}
\begin{itemize}
    \item \textbf{Jet Physics}. We also demonstrate empirically that \astar can 
find exact solutions to larger jet physics datasets 
than previous work \cite{greenberg2020compact}, and can improve over benchmarks among approximate methods in data sets with enormous search spaces (exceeding $10^{300}$ trees). (\S \ref{sec:jet_experiments}). 
    \item \textbf{Clustering benchmarks}. We find that \astar search finds solutions with improved hierarchical correlation clustering cost compared with common greedy methods on benchmark datasets \cite{kobren2017hierarchical}.(\S \ref{sec:benchmarks}). 
\end{itemize}

\section{Preliminaries}
\label{sec:prelim}

A hierarchical clustering is a recursive partitioning of a dataset into nested subsets:

\begin{definition}{\textbf{\emph{(Hierarchical Clustering)}}}
  \label{defn:hclustering}
  Given a dataset of elements, $\dataset = \{x_i\}_{i=1}^N$, a
  \textbf{hierarchical clustering}, $\tree$, is a set of nested subsets of $\dataset$, s.t. $\dataset \in \tree$, $\{\{x_i\}\}_{i=1}^N \subset \tree $, and $\forall \dataset_i, \dataset_j \in \tree$, either $\dataset_i \subset \dataset_j$,  $\dataset_j \subset \dataset_i$, or $\dataset_i \bigcap \dataset_j = \emptyset$. Further, $\forall \dataset_i \in \tree$, if  $\exists \dataset_j \in \tree$ s.t. $\dataset_j \subset \dataset_i$, then $\exists \dataset_k \in \tree$ s.t. $\dataset_j \bigcup \dataset_k = \dataset_i$.
\end{definition}
Consistent with our application in jet physics and previous work \cite{greenberg2020compact}, we limit our consideration to hierarchical clusterings with binary branching factor.\footnote{Binary trees encode at least as many \emph{tree consistent partitions} as multifurcating / n-ary trees \cite{blundell2010bayesian,greenberg2020compact} and for well known cost functions, such as Dasgupta's cost, provide trees of lower cost than multifurcating / n-ary trees \cite{dasgupta2016cost}.}

Hierarchical clustering cost functions represent the quality of a hierarchical clustering for a particular dataset.
In this work, we study a general family 
of hierarchical clustering costs defined as the sum over pairs of sibling clusters in the structure (Figure \ref{fig:cost_model}). This family of costs generalizes well known costs like Dasgupta's \cite{dasgupta2016cost}. Formally, the family of hierarchical clustering costs we study in this paper 
can be written as:

\begin{definition}
\label{defn:energy_based_hclustering}
\textbf{\emph{(Hierarchical Clustering Costs)}}
Let $\dataset$ be a dataset, $\tree$ be a hierarchical clustering of $\dataset$, let $\EfunS: 2^{\dataset} \times 2^{\dataset} \rightarrow \RR^{+}$ be a function
describing the cost incurred by a pair of sibling nodes in $\tree$. We define the cost, $\EfunT(\tree)$ of a hierarchical clustering $\tree$ as:
\begin{equation}
    \label{eq:tree-cost}
      \EfunT(\tree) = \sum_{X_L,X_R \in \textsf{sibs}(\tree)} \EfunS(X_L,X_R)
\end{equation}
where $ \textsf{sibs}(\tree) = \{(X_L,X_R)|X_L \in \tree, X_R \in \tree, X_L \cap X_R = \emptyset, X_L \cup X_R \in \tree\}$. 

The goal of hierarchical clustering is then to find the lowest cost $\tree$ among all hierarchical clusterings of $\dataset$, $\alltrees(\dataset)$, i.e., $\argmin_{\tree \in \alltrees(\dataset)} \EfunT(\tree)$ .
\end{definition}
\begin{figure}
    \centering
    \includegraphics[width=0.38\textwidth]{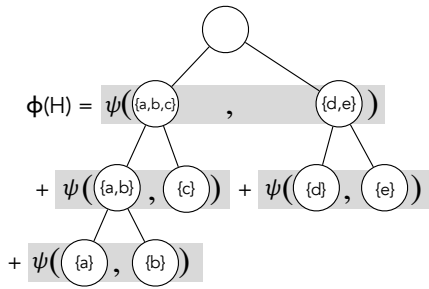}
    \caption{\textbf{Family of Hierarchical Clustering Cost Functions}. This paper studies cost functions that decompose as the sum of a cost term computed for each
    pair of sibling nodes in the tree structure.}
    \label{fig:cost_model}
\end{figure}
% https://www.icloud.com/keynote/0Hh3Dd4OWdU-pPZXmsJmySrUA#a_star_figures 
\vspace{-0.5cm}
\section{A* Search for Clustering}
\label{sec:methods}

The fundamental challenge of applying \astar to clustering
is the massive state space. Na\"ive representations of the \astar state space and frontier require explicitly storing every tree structure, potentially growing to be at least the number of binary trees $(2n-3)!!$.
To overcome this, we propose an approach using a \emph{cluster trellis} to (1) compactly encode states in the space
of hierarchical clusterings (as paths from the root to the leaves of the trellis), and (2) compactly represent the search frontier (as nested priority queues).
We first describe how the cluster trellis represent the A* state space and frontier, and then show how \astar search this data structure to find the lowest cost clustering.

\subsection{\astar Search and State Space}
\astar is a best-first search algorithm for finding the minimum cost path between a starting node and a goal node in a weighted graph. Following canonical notation, the function $f(n) = g(n) + h(n)$ determines the ``best'' node to search next, where $g(n)$ is the cost of the path up from the start to node $n$ and $h(n)$ is a heuristic estimating the cost of traveling from $n$ to a goal. 

When the heuristic $h(\cdot)$ is \emph{admissible} (under estimates cost), \astar is admissible and provides an optimal solution. If $h(\cdot)$ is both admissible and \emph{consistent} or \emph{monotone} (the heuristic cost estimate of reaching a goal from state $x$, $h(x)$, is less than the actual cost of traveling from state $x$ to state $y$ plus the heuristic cost estimate of reaching a goal from state y, $h(y)$), then \astar is \emph{optimally efficient}\footnote{It is worth noting that there is a trade-off between the quality of the heuristic and the number of iterations of A*, with better heuristics potentially resulting in fewer iterations at the cost of taking longer to compute. An extreme example being that a perfect heuristic will require no-backtracking, except in cases of ties.}. 

In order to use \astar to search for clusterings, we need to define the search space, the graph being searched over, and the goal states.

\begin{definition}\emph{{(\textbf{Search State / Partial Hierarchical Clustering})}}
We define each state in the search space to be a \textbf{partial hierarchical clustering}, $\partialtree$, which is a subset of some hierarchical clustering of the dataset $\dataset$, i.e., $\exists \tree \in \alltrees(\dataset), \text{ s.t. } \partialtree \subseteq \tree$. 
A state is a \textbf{goal} state if it is a hierarchical clustering, e.g. $\partialtree \in \alltrees(\dataset)$.
\end{definition}

\subsection{ The Cluster Trellis for Compactly Encoding the Search Space}

It is intractable to search over a graph that na\"ively encodes these states, even when using \astar, as the number of goal states is massively large ($(2N-3)!!$), and therefore the total number of states is even larger. Such a na\"ive search could require super exponential space to maintain the the frontier, i.e., the set of states \astar is currently open to exploring.
We propose an alternative encoding that utilizes a trellis data structure to encode the frontier and state space. 
Our proposed encoding reduces the super-exponential space and time required for \astar to find the exact MAP to exponential in the worst case.

Previous work, \cite{greenberg2020compact},
defines the \emph{cluster trellis for hierarchical clustering} (hereafter denoted \emph{trellis}) as a directed acyclic graph, $\trellis$, where, like a hierarchical clustering, 
the leaves of the trellis, $\leaves(\trellis)$, correspond to data points.
The internal nodes of $\trellis$ correspond to clusters, and 
edges in the trellis are defined between child and parent nodes, where children contain a subset of the points contained by their parents. 

\begin{definition}{(Cluster Trellis for Hierarchical Clustering) \cite{greenberg2020compact}}
  \label{defn:hclustering}
  Given a dataset of elements, $\dataset = \{x_i\}_{i=1}^N$, a trellis $\trellis$ is a directed acyclic graph with vertices corresponding to subsets of $\dataset$, i.e., $\trellis = \{T_1,\dots,T_k\} \subseteq \PP(\dataset)$, where $\PP(\dataset)$ is the powerset of $\dataset$. Edges in the trellis are defined between a child node $C\in\trellis$ and a parent $P\in\trellis$ such that $P \setminus C \in \trellis$ (i.e., $C$ and $P\setminus C$ form a two-partition of $P$). 
\end{definition}

Note that the trellis is a graph that compactly represents tree structures, enabling the use of various graph search algorithms for hierarchical clustering.

\begin{figure}
    \centering
    \includegraphics[width=0.7\columnwidth]{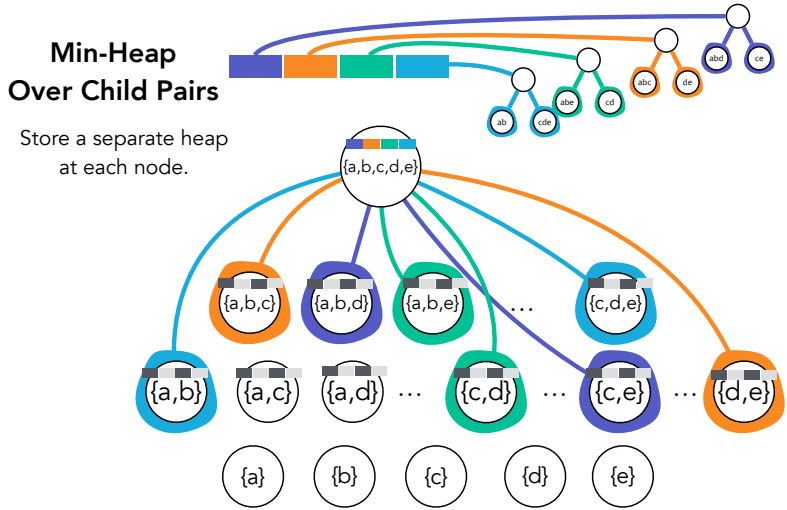}
    \caption{\textbf{Nested Min Heaps} The \astar search space is compactly encoded in the trellis. Each node stores a min-heap ordering pairs of its children. Each pair encodes a two partition (e.g. split) of its parents points. }
    \label{fig:min_heap}
\end{figure}

\subsection{\astar on the Cluster Trellis}
\label{sec:astar_on_trellis}
In this section, we describe how to use of the trellis to run \astar in less than super-exponential time and space.
We show that when using a `full'' trellis, one that contains all possible subsets of the dataset (i.e., $\trellis = \PP(\dataset)$), \astar will find an exact solution, i.e., the optimal (minimum possible cost) hierarchical clustering. 
Interestingly, when the trellis is ``sparse``, i.e., contains only a subset of all possible subsets ($ \trellis \subset \PP(\dataset)$), we further show that \astar finds the lowest cost hierarchical clustering among those encoded by the trellis structure.
In our proceeding analysis, we prove the correctness as well as the time and space complexity of running \astar search on a trellis. 

To run \astar on trellis $\trellis$, we store at each node,
$\dataset_i$, in $\trellis$ a min heap over two-partition pairs, which correspond to the node's children, $\frontier$, which we denote as $\dataset_i[\frontier]$. Each node also stores the value of the lowest cost split at each node, $\dataset[\maxsplit]$, as well as a pointer to children associated with the best split, $\dataset[\argmaxsplit] = \{\dataset_\lchild, \dataset_\rchild\}$.

The starting state of the search refers to the root node corresponding to the full dataset, $\dataset$. 
We compactly encode a partial hierarchical clustering as a set of paths through the trellis structure. 
In particular,
\begin{equation}\label{eq:tree_from_trellis}
\tree_\argmaxsplit(\dataset) = \{\dataset\} \bigcup_{\dataset_c \in \dataset[\argmaxsplit]} \tree_\argmaxsplit(\dataset_c)
\end{equation}
where $\dataset[\argmaxsplit] = \emptyset$ if $\dataset \in \leaves(\tree)$, where $\leaves(\tree) = \{X \ | \ X \in \tree, \ \nexists X' \in \tree, X' \subset X\}$. 
The partial hierarchical clustering can be thought of as starting at the root node, adding to the tree structure the pair of children at the top of the root node's min heap, and then for each child in this pair, selecting the pair of children (grandchildren of the root) from their min-heaps to add to the tree structure. The algorithm proceeds in this way, adding descendants until it reachs nodes that do not yet have min-heaps initialized. 

Each node's min heap, $\dataset_i[\frontier]$, stores five-element tuples:
\begin{align}
    (f_{LR},\ g_{LR},\ h_{LR}, \dataset_L,\ \dataset_R) \in \dataset_i[\frontier]
\end{align}

Exploration in \astar consists of instantiating the min heap 
of a given node, where the heap ordering is determined by $f_{LR} = g_{LR} + h_{LR}$. The value $g_{LR}$ corresponds to the $g$ value of the partial hierarchical clustering rooted at $\dataset_L \cup \dataset_R$, including both the nodes $\dataset_L$ and $\dataset_R$. 
 Formally:
\begin{align}
\label{eq:glr}
    g_{LR} &= \EfunS(\dataset_L, \dataset_R) 
    + \sum_{\dataset_{LL},\dataset_{LR} \in  \textsf{sibs}(\tree_\argmaxsplit(\dataset_L))} \EfunS(\dataset_{LL}, \dataset_{LR})% \nonumber \\
    + \sum_{\dataset_{RL},\dataset_{RR} \in  \textsf{sibs}(\tree_\argmaxsplit(\dataset_R))} \EfunS(\dataset_{RL}, \dataset_{RR})
\end{align}
Recall that $\tree_\argmaxsplit$ represents a complete or partial hierarchical clustering. The value $h$ gives an estimate of the potential 
for all of the leaves of $\tree_\argmaxsplit$, and is defined to be 0 if all of the leaf nodes of $\tree_\argmaxsplit$ are singleton clusters (in which case $\tree_\argmaxsplit$ is a complete hierarchical clustering). Formally:
\begin{align}
\label{eq:hlr}
    h_{LR} = \sum_{\dataset_\ell \in \textsf{lvs}(\tree_\argmaxsplit(\dataset_L \cup \dataset_R))} \Hfun(\dataset_\ell)
\end{align}
where $\Hfun(\dataset_\ell)$ is the objective-specific heuristic function required by A*, and $\Hfun(\dataset_\ell)=0$ if $\dataset_\ell$ is a singleton.

\begin{figure}
\centering
\includegraphics[width=0.7\columnwidth]{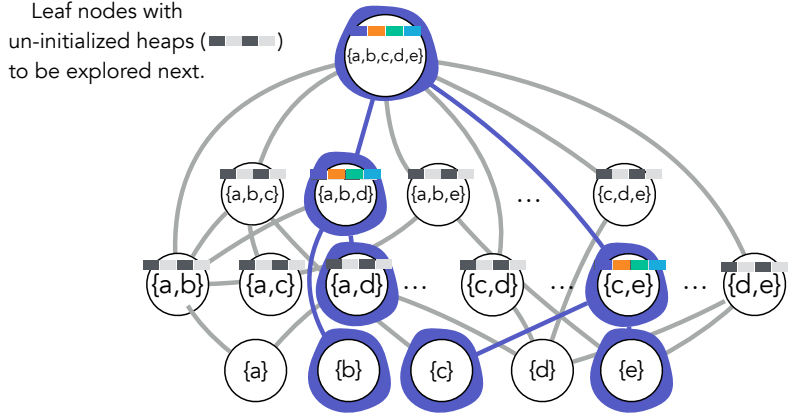}
\caption{\small{\textbf{State Space} A state (i.e., a partial hierarchical clustering) is compactly encoded as paths in the trellis structured, discovered by following the root-to-leaf pointers.
}}
\label{fig:min_heap}
\end{figure}

Given these definitions, \astar can be executed in the following way. 
First, follow the paths from the root of the trellis to find the best partial hierarchical clustering (using Eq \ref{eq:tree_from_trellis}).
If this state is a goal state and the $h_{LR}$ value at the top of the 
min heap is 0, then return this structure. 
If the state is not a goal state, explore the leaves of the partial hierarchical clustering, instantiating
the min heap of each leaf with the children of the leaf nodes and their corresponding $f_{LR}, g_{LR}, h_{LR}$
values (from Eq. \ref{eq:glr} \& \ref{eq:hlr}). Then, after exploration, update 
the heap values of $g_{LR}$ and $h_{LR}$ in each of the ancestor nodes (in the partial 
hierarchical clustering)
of the newly explored leaves. 
See Algorithm \ref{alg:tree-astar} for a summary in pseudocode.

First, we show that Algorithm \ref{alg:tree-astar} will find the exact optimal hierarchical clustering among all those represented in the trellis structure. 

\begin{theorem}{\textbf{\emph{(Correctness)}}}\label{THM:CORECTNESS}
Given a trellis, $\trellis$, dataset, $\dataset$, and objective-specific admissible heuristic function, $\Hfun$, Algorithm \ref{alg:tree-astar} yields $\tree^\star = \argmin_{\tree \in \alltrees(\trellis)} \EfunT(\tree)$
where $\alltrees(\trellis)$ is the set of all trees represented by the trellis.
\end{theorem}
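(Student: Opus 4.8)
The plan is to recognize Algorithm~\ref{alg:tree-astar} as an execution of \astar over the finite state graph whose vertices are the partial hierarchical clusterings representable in $\trellis$, whose goal states are the complete clusterings in $\alltrees(\trellis)$, and whose heuristic is the (per-cluster) admissible function $\Hfun$; optimality of \astar under an admissible heuristic then yields the statement. Rather than reasoning abstractly about the frontier, I would make this concrete by defining, for each cluster $X\in\trellis$, the value $V(X)$ to be $\Hfun(X)$ when $X$ is a singleton or its min-heap has not been instantiated, and $\min\{\,\EfunS(\dataset_L,\dataset_R)+V(\dataset_L)+V(\dataset_R)\,\}$ over two-partitions $\dataset_L\cup\dataset_R=X$ with $\dataset_L,\dataset_R\in\trellis$ otherwise. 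The first step is to verify that $X[\maxsplit]$, as maintained by the algorithm, equals $V(X)$ once $X$ has been explored: this is an induction along the exploration order using Eqs.~\eqref{eq:glr}--\eqref{eq:hlr}, since $g_{LR}$ adds $\EfunS(\dataset_L,\dataset_R)$ to the $g$ values of the sub-hierarchies $\tree_\argmaxsplit(\dataset_L)$ and $\tree_\argmaxsplit(\dataset_R)$, while $h_{LR}$ is the sum of $\Hfun$ over the leaves of $\tree_\argmaxsplit$, so that the ancestor-update step restores the identity after each expansion.

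The second step is the lower bound $V(\dataset)\le\EfunT(\tree)$ for every $\tree\in\alltrees(\trellis)$, and hence $V(\dataset)\le\min_{\tree\in\alltrees(\trellis)}\EfunT(\tree)$. I would prove the stronger claim $V(X)\le\EfunT(\tree|_X)$ for every node $X$ of $\tree$, where $\tree|_X$ denotes the subtree rooted at $X$, by bottom-up induction on $\tree$. In the base case $X$ is a leaf of $\tree$ (a singleton) or an unexplored internal node, and per-cluster admissibility of $\Hfun$ --- i.e.\ $\Hfun(X)\le\min_{\tree'\in\alltrees(X)}\EfunT(\tree')$, with value $0$ on singletons --- gives $V(X)=\Hfun(X)\le\EfunT(\tree|_X)$ directly. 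For an explored internal node, $\tree$ splits $X$ into a pair $(\dataset_L^{\tree},\dataset_R^{\tree})$ of trellis nodes (this split is present in $\trellis$ since $\tree\in\alltrees(\trellis)$), so $V(X)\le\EfunS(\dataset_L^{\tree},\dataset_R^{\tree})+V(\dataset_L^{\tree})+V(\dataset_R^{\tree})\le\EfunS(\dataset_L^{\tree},\dataset_R^{\tree})+\EfunT(\tree|_{\dataset_L^{\tree}})+\EfunT(\tree|_{\dataset_R^{\tree}})=\EfunT(\tree|_X)$, where the last equality is the additive decomposition of $\EfunT$ over sibling pairs from Definition~\ref{defn:energy_based_hclustering}.

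The third step closes the loop. When the algorithm halts, the tree $\tree^\star=\tree_\argmaxsplit(\dataset)$ obtained by following the root-to-leaf $\argmaxsplit$ pointers (Eq.~\eqref{eq:tree_from_trellis}) is a goal state, so all of its leaves are singletons and every $\Hfun$-contribution vanishes; unrolling the defining recursion of $V$ along those pointers then gives $V(\dataset)=\sum_{(X_L,X_R)\in\textsf{sibs}(\tree^\star)}\EfunS(X_L,X_R)=\EfunT(\tree^\star)$. Since $\tree^\star\in\alltrees(\trellis)$, combining with the second step yields $\EfunT(\tree^\star)=V(\dataset)\le\min_{\tree\in\alltrees(\trellis)}\EfunT(\tree)\le\EfunT(\tree^\star)$, so equality holds throughout and $\tree^\star=\argmin_{\tree\in\alltrees(\trellis)}\EfunT(\tree)$. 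Termination is immediate: each non-terminal iteration instantiates at least one new min-heap and $\trellis$ is finite, so the pointer-extracted state is eventually a complete clustering.

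The step I expect to be the real obstacle is establishing that the trellis bookkeeping faithfully realizes $V(\cdot)$ together with the \astar frontier: that following the $\argmaxsplit$ pointers always returns a minimum-$f$ partial clustering --- which rests on the optimal-substructure identity $f(\partialtree)=\EfunS(\dataset_L,\dataset_R)+f(\partialtree_L)+f(\partialtree_R)$ obtained by adding the additive decompositions of $g_{LR}$ and $h_{LR}$ --- and that the ancestor-update step keeps every stored $g_{LR}$ and $h_{LR}$ consistent with the current best sub-hierarchies. A related point worth spelling out is soundness of the DAG-sharing of subproblems: the best sub-clustering of a cluster may be reused inside every tree containing it precisely because $\EfunT$ and the heuristic both decompose additively and context-independently over clusters, so a subtree's contribution to a containing tree does not depend on its surrounding context.
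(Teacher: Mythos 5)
Your proof is essentially correct but takes a genuinely different route from the paper's. The paper argues by contradiction: assuming the returned tree $\tree_\argmaxsplit(\dataset)$ is not optimal, it locates a highest node $\dataset_i$ at which the returned tree and the true optimum $\tree^\star$ branch differently and at which the optimal sub-hierarchy is strictly cheaper, then derives a contradiction from the min-heap ordering at $\dataset_i$ together with admissibility (the entry for $\tree^\star$'s split must be in $\dataset_i[\frontier]$ with $f^\star \leq \EfunT(\tree^\star(\dataset_i))$, yet it lost to the top entry whose $h$-component is $0$ and whose $f$ therefore equals $\EfunT(\tree_\argmaxsplit(\dataset_i))$). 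You instead run a direct argument through a Bellman-style value function $V$, proving the global lower bound $V(\dataset)\le\min_{\tree\in\alltrees(\trellis)}\EfunT(\tree)$ by bottom-up induction and then showing the extracted tree attains $V(\dataset)$ at termination because all heuristic contributions vanish. Your version is arguably cleaner and makes the role of additivity and admissibility more transparent; the paper's version works more locally and never needs to name the recursion $V$ explicitly, only the invariant that every heap entry's $f$ underestimates the best completion through that split.

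One claim in your first step is stronger than what the algorithm guarantees and should be weakened. You assert that after exploration the heap top at $X$ equals $V(X)=\min_{(\dataset_L,\dataset_R)}\{\EfunS(\dataset_L,\dataset_R)+V(\dataset_L)+V(\dataset_R)\}$ with the \emph{current} values of $V$ at the children. But Algorithm \ref{alg:tree-astar} only pops and re-enqueues the \emph{top} entry of each ancestor's heap after an exploration; non-top entries keep the $f_{LR}$ computed when they were enqueued (typically $\EfunS(\dataset_L,\dataset_R)+\Hfun(\dataset_L)+\Hfun(\dataset_R)$), which need not track subsequent changes at $\dataset_L,\dataset_R$. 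So exact equality with $V(X)$ can fail. The conclusion survives because only one direction is needed: every entry's $f$ (stale or fresh) is, by your own step-2 induction applied to heap entries rather than to $V$, a lower bound on the cost of the cheapest complete sub-clustering of $X$ using that split, and at termination the top entry along the extracted tree has $h=0$ and $f$ equal to the exact cost. The heap property then yields optimality at every node. This is precisely the invariant the paper's contradiction argument leans on, so the repair is small, but as written your "the algorithm maintains $V$" step would not hold literally.
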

\vspace{-0.4cm}
See Appendix \S\ref{prf:correctness} for proof.

\begin{corollary}{\textbf{\emph{(Optimal Clustering)}}}\label{THM:OPTIMAL}
Given a dataset, $\dataset$,let the trellis, $\trellis = \PP(\dataset)$, be the powerset. Algorithm \ref{alg:tree-astar} yields the (global) optimal hierarchical clustering for $\dataset$.
\end{corollary}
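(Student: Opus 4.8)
The plan is to derive the corollary directly from Theorem~\ref{THM:CORECTNESS} by showing that when $\trellis = \PP(\dataset)$ the set of trees encoded by the trellis is \emph{all} hierarchical clusterings of $\dataset$, i.e., $\alltrees(\trellis) = \alltrees(\dataset)$. Given that set equality, Theorem~\ref{THM:CORECTNESS} immediately yields that Algorithm~\ref{alg:tree-astar} returns $\argmin_{\tree \in \alltrees(\dataset)} \EfunT(\tree)$, which by Definition~\ref{defn:energy_based_hclustering} is exactly the global optimal hierarchical clustering for $\dataset$. So all the real content is in the set equality, and the corollary is then just the specialization of the correctness theorem to the full trellis.

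For the set equality, the inclusion $\alltrees(\PP(\dataset)) \subseteq \alltrees(\dataset)$ is immediate, since any tree the algorithm can assemble from the trellis (via Eq.~\ref{eq:tree_from_trellis}) is, by construction, a set of nested subsets of $\dataset$ satisfying Definition~\ref{defn:hclustering}. For the reverse inclusion, I would take an arbitrary $\tree \in \alltrees(\dataset)$ and verify it is representable in $\trellis = \PP(\dataset)$: (i) every node $\dataset_i \in \tree$ is a subset of $\dataset$, hence a vertex of $\PP(\dataset)$; (ii) for every parent--child pair $P, C$ in $\tree$, the binary branching assumption together with Definition~\ref{defn:hclustering} gives a unique sibling $C'$ with $C \cup C' = P$ and $C \cap C' = \emptyset$, so $P \setminus C = C' \in \PP(\dataset) = \trellis$, which by the definition of the trellis means the edge $C \to P$ is present. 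Thus every node and every parent--child edge of $\tree$ lives in $\trellis$; and since the min-heap $\dataset_i[\frontier]$ at each node ranges over \emph{all} two-partitions of $\dataset_i$, the particular split $\tree$ uses at each internal node is among the candidates A* is free to select. Hence $\tree \in \alltrees(\PP(\dataset))$, completing $\alltrees(\dataset) \subseteq \alltrees(\PP(\dataset))$.

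The step I expect to require the most care is item (ii) — making precise what ``represented by the trellis'' means and confirming that the path/min-heap encoding of Eq.~\ref{eq:tree_from_trellis} can realize \emph{any} binary hierarchical clustering without obstruction. This reduces to the observation that a full trellis imposes no restriction on which two-partition is chosen at a node, so the recursion defining $\tree_\argmaxsplit$ can follow the exact split pattern of the target tree down to singletons. Once that bookkeeping is in place, no new optimization, admissibility, or complexity argument is needed: the result follows by plugging $\trellis = \PP(\dataset)$ into Theorem~\ref{THM:CORECTNESS}.
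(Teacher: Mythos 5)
Your proposal is correct and matches the paper's argument: both reduce the corollary to showing that every hierarchical clustering of $\dataset$ is represented in the full trellis and then invoke Theorem~\ref{THM:CORECTNESS}, with the same two checks (every node of the tree is a subset of $\dataset$ and hence a trellis vertex; every sibling pair is a valid two-partition and hence a child pair in the full trellis). The only difference is presentational --- you argue the inclusion directly while the paper phrases it as a proof by contradiction --- so no substantive gap either way.
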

\vspace{-0.2cm}
See Appendix \S\ref{prf:optimal} for proof.

Next, we consider the space and time complexities of Algorithm \ref{alg:tree-astar}. We observe that the algorithm scales with the size of the trellis structure, and requires at most exponential space and time, rather than scaling with the number of trees, $(2n-3)!!$, which is super-exponential.

\begin{theorem}{\textbf{\emph{(Space Complexity)}}}
\label{THM:SPACE}
For trellis, $\trellis$ and dataset, $\dataset$, Algorithm \ref{alg:tree-astar} finds the lowest cost hierarchical clustering of $\dataset$ present in $\trellis$ in space $O(|\trellis|^2)$, which is at most $O(3^{|\dataset|})$ when $\trellis = \PP(\dataset)$. 
\end{theorem}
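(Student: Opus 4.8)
The plan is to account for every piece of state that Algorithm~\ref{alg:tree-astar} keeps and show all of it resides inside the trellis, so that the space usage is governed by the total size of the per-node min-heaps; the correctness part of the statement is exactly Theorem~\ref{THM:CORECTNESS}. First I would enumerate the persistent storage: at each node $\dataset_i \in \trellis$ we keep (i) the min-heap $\dataset_i[\frontier]$ of five-tuples $(f_{LR}, g_{LR}, h_{LR}, \dataset_L, \dataset_R)$, one for each two-partition of $\dataset_i$ realized in $\trellis$, and (ii) the scalar $\dataset_i[\maxsplit]$ together with the pointer $\dataset_i[\argmaxsplit]$, each of size $O(1)$ once a subset is represented by a pointer to its trellis node. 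The key point is that neither the \astar frontier nor any partial hierarchical clustering is materialized separately: a partial clustering is reconstructed on the fly by chasing $\argmaxsplit$ pointers from the root via Eq.~\eqref{eq:tree_from_trellis}, and the search frontier is literally the union of the heaps. Hence nothing outside (i) and (ii) contributes asymptotically.

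Next I would bound the heap sizes. A two-partition of $\dataset_i$ is an unordered pair $\{\dataset_L,\dataset_R\}$ of trellis nodes with $\dataset_L \cap \dataset_R = \emptyset$ and $\dataset_L \cup \dataset_R = \dataset_i$; such a pair is determined by either one of its parts, so $\dataset_i[\frontier]$ has at most $|\trellis|$ entries. Summing over the $|\trellis|$ nodes and adding the $O(|\trellis|)$ contribution of (ii) yields the stated $O(|\trellis|^2)$ bound for a general trellis.

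For the full trellis $\trellis = \PP(\dataset)$ the crude estimate $|\trellis|^2 = 4^{|\dataset|}$ is wasteful, so here I would replace the per-node bound $|\trellis|$ by the exact count of two-partitions of a size-$k$ set, namely $2^{k-1}-1$ (assign each of the $k$ elements to one side, discard the two assignments with an empty side, divide by two). Since $\PP(\dataset)$ has $\binom{N}{k}$ nodes of size $k$, the total heap storage is $\sum_{k=1}^{N}\binom{N}{k}\bigl(2^{k-1}-1\bigr) \le \tfrac12\sum_{k=0}^{N}\binom{N}{k}2^{k} = \tfrac12\cdot 3^{N}$ by the binomial theorem, and the $O(2^{N})$ overhead from (ii) is absorbed, giving $O(3^{|\dataset|})$.

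I expect the main obstacle to be the first step rather than the counting: one must argue carefully that the transient memory used during an exploration step --- instantiating a freshly reached leaf's heap and then propagating the updated $g_{LR}$ and $h_{LR}$ values up through the ancestors of the current partial clustering --- never exceeds the storage already allocated in the trellis, so that the persistent bound is also the peak bound. Once that is settled, the two counting steps (the pairing argument and the identity $\sum_k \binom{N}{k}2^{k} = 3^{N}$) are routine.
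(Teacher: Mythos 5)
Your proposal is correct and follows essentially the same route as the paper's proof: charge all persistent state to the per-node min-heaps, bound each heap by the number of child pairs of that node, and sum (giving $O(|\trellis|^2)$ in general and $\sum_k \binom{N}{k}\,O(2^{k}) = O(3^{N})$ for the full trellis via the binomial theorem). Your injection argument for the sparse case (each two-partition of $\dataset_i$ is determined by one of its parts, hence at most $|\trellis|$ heap entries per node) is a slightly cleaner way to reach the same $O(|\trellis|^2)$ bound than the paper's worst-case structural argument, and your remark about verifying that transient memory during exploration is absorbed is a reasonable point the paper leaves implicit.
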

\vspace{-0.2cm}
See Appendix \S\ref{prf:space} for proof.

\begin{algorithm}[H]
 \caption{A* Hierarchical Clustering Search}
 \label{alg:tree-astar}
 \begin{algorithmic}[1]
\Function{Search}{$\trellis$, $\dataset$, $\Hfun$}
    \State \textbf{Input: } A trellis structure $\trellis$ and dataset $\dataset$, a heuristic function $\Hfun$
    \State \textbf{Output: } The lowest cost tree represented in the trellis.
    \Do
        \State \textsf{$\rhd$ Get state from trellis} (Eq. \ref{eq:tree_from_trellis})
        \State $\tree_\argmaxsplit(\dataset) = \{\dataset\} \bigcup_{\dataset_c \in \dataset[\argmaxsplit]} \tree_\argmaxsplit(\dataset_c)$ 
        \State $f_\textsf{at root}, \_, h_\textsf{at root}, \_, \_ \gets \dataset[\frontier].\textsf{peek}()$
        \State \textsf{$\rhd$ At goal state?}
        \If{$h_\textsf{at root} = 0$ \textbf{and} $|\textsf{lvs}(\tree_\argmaxsplit(\dataset))| = |\dataset|$}
        \State \textbf{return} $f_\textsf{at root},\ \tree_\argmaxsplit(\dataset)$
        \EndIf
        \State \textsf{$\rhd$ Explore new leaves}
        \For{$\dataset_i$ \textbf{in} $\textsf{lvs}(\tree_\argmaxsplit(\dataset))$}
            \For{ $\dataset_L,\ \dataset_R$ \textbf{in} $\textsc{children}(\dataset_i)$}
                 \State Define $g_{LR}$ according to Eq. \ref{eq:glr}
                 \State Define $h_{LR}$ according to Eq. \ref{eq:hlr}
                 \State $f_{LR} \gets g_{LR} + h_{LR}$
                 \State $\dataset_i[\frontier].\textsf{enqueue}((f_{LR},g_{LR},h_{LR},\dataset_L,\ \dataset_R))$
             \EndFor
        \EndFor
        \State \textsf{$\rhd$ Update each node in the tree's min heap}
        \For{$\dataset_i$ \textbf{in} $\tree_\argmaxsplit(\dataset)$ \textbf{from leaves to root}}
            \State $\_, \_, \_, \dataset_L,\ \dataset_R \gets \dataset_i[\frontier].\textsf{pop}()$
            \State Define $g\gets g_{LR}$ according to Eq. \ref{eq:glr}
            \State $h \gets 0$
            \For{$\dataset_c$ \textbf{in} $[X_L, X_R]$,}
                \If{ $\dataset_c[\frontier]$ \textbf{is defined}}
                    \State $\_, g_c, h_c, \_,\ \_ \gets \dataset_c[\frontier].\textsf{peek}()$
                    \State $g \gets g + g_c$
                    \State $h \gets h + h_c$
                \Else
                    \State $h \gets h + \Hfun(X_c)$
                \EndIf
            \EndFor
            \State \textsf{$\rhd$ Update the $f$ value of split $X_L, X_R$}
            \State $\dataset_i[\frontier].\textsf{enqueue}((g+h, g, h, X_L, X_R))$
        \EndFor
    \doWhile{True}
\EndFunction
  \end{algorithmic}
\end{algorithm}

\begin{theorem}{\textbf{\emph{Time Complexity}}}\label{THM:TIME}
For trellis, $\trellis$ and dataset, $\dataset$, Algorithm \ref{alg:tree-astar} finds the lowest cost hierarchical clustering of $\dataset$ in time $O(|\{\ \ch(\dataset, \trellis) \mid \dataset \in \trellis\}|)$. which is at most $O(3^{|\dataset|})$ when
$\trellis = \PP(\dataset)$. 
\end{theorem}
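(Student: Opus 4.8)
The plan is to measure the running time in units of \emph{trellis child-pairs touched}, argue that each child-pair is touched only $O(1)$ times (up to heap-operation cost), and then evaluate the resulting sum for the full trellis $\trellis = \PP(\dataset)$. The central step is a charging argument: I would show that every trellis node $\dataset_i$ has its min-heap $\dataset_i[\frontier]$ instantiated by the exploration loop of Algorithm~\ref{alg:tree-astar} at most once over the whole run. This holds because that loop ranges only over $\leaves(\tree_\argmaxsplit(\dataset))$ --- nodes whose $\dataset_i[\argmaxsplit]$ pointer is still empty --- and the moment a node's heap is created its $\argmaxsplit$ pointer is set to the heap's top element and stays non-empty, so the node can never re-enter $\leaves(\tree_\argmaxsplit(\dataset))$. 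Therefore the total number of tuples ever enqueued by exploration over the entire execution is exactly $\sum_{\dataset_i\in\trellis}|\ch(\dataset_i,\trellis)|$, each enqueue costing $O(\log|\ch(\dataset_i,\trellis)|)$ heap time.

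Next I would bound the number of passes of the outer do-while loop. Every non-terminating pass instantiates at least one new heap: the leaves of any partial clustering $\tree_\argmaxsplit(\dataset)$ always partition $\dataset$ (each $\argmaxsplit$ pointer is a valid two-partition, by construction of the heap tuples), so if all those leaves were singletons the clustering would be complete and its root $h$-value would be $0$, which triggers the \textbf{return}; hence some leaf is a non-singleton trellis node whose heap gets created on that pass. Since there are at most $|\trellis|$ non-singleton nodes, there are at most $|\trellis|+1$ passes. Within a pass, (i) reconstructing $\tree_\argmaxsplit(\dataset)$ by following $\argmaxsplit$ pointers and (iii) the leaves-to-root update loop each touch at most $2|\dataset|-1$ nodes, and the update performs exactly one pop and one re-enqueue per node (so heap sizes never exceed $|\ch(\dataset_i,\trellis)|$); (ii) the exploration of the leaves is charged to the total above. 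Summing gives total time $O\big(\sum_{\dataset_i\in\trellis}|\ch(\dataset_i,\trellis)| + \mathrm{poly}(|\dataset|)\cdot|\trellis|\big)$, and since every non-singleton node reached in the run contributes at least one child-pair, the first term dominates, yielding $O\big(\sum_{\dataset_i\in\trellis}|\ch(\dataset_i,\trellis)|\big)$ (treating each heap operation as $O(1)$, or otherwise absorbing an $O(\log)$ factor).

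Finally, for $\trellis = \PP(\dataset)$ with $N=|\dataset|$, a subset of size $k$ admits $2^{k-1}-1\le 2^k$ two-partitions into trellis nodes and there are $\binom{N}{k}$ such subsets, so $\sum_{\dataset_i\in\PP(\dataset)}|\ch(\dataset_i,\PP(\dataset))|\le\sum_{k=0}^{N}\binom{N}{k}2^k = 3^N$ by the binomial theorem, giving the claimed $O(3^{|\dataset|})$ bound. I expect the main obstacle to be making the charging step fully rigorous --- precisely, ruling out that a node re-enters $\leaves(\tree_\argmaxsplit(\dataset))$ after its heap already exists, and confirming that the pop/re-enqueue bookkeeping in the update loop keeps every heap's size bounded by its child count --- both of which rest on structural properties of Algorithm~\ref{alg:tree-astar} that are already used in the correctness argument (Theorem~\ref{THM:CORECTNESS}).
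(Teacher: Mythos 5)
Your proposal is correct and follows essentially the same route as the paper's proof: the key step in both is that each trellis node's min-heap is instantiated at most once, so the dominant cost is heap population summed over all nodes, evaluated for the full trellis as $\sum_{k}\binom{N}{k}2^{k}=3^{N}$, with the descent and leaves-to-root update costs absorbed as lower-order terms. Your explicit bound on the number of outer-loop passes via ``each non-terminating pass creates at least one new heap'' is a slightly cleaner way of organizing the same accounting, but it is not a different argument.
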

\vspace{-0.2cm}
See Appendix \S\ref{prf:time} for proof.

Finally, we observe that Algorithm \ref{alg:tree-astar} is optimally efficient when given a consistent / monotone heuristic. 

\begin{theorem}{\textbf{\emph{Optimal Efficiency}}}\label{THM:OPT_EFFICIENCY}
For trellis, $\trellis$ and dataset, $\dataset$, Algorithm \ref{alg:tree-astar} is optimally efficient if $h$ is a consistent / monotone heuristic.
\end{theorem}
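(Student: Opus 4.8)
The plan is to reduce the claim to the classical optimal-efficiency theorem for \astar with a consistent heuristic, by showing that Algorithm~\ref{alg:tree-astar} is a faithful implementation of tree-search \astar over the state graph of partial hierarchical clusterings, with the trellis merely serving as a compact bookkeeping device for the frontier and the $g,h$ values. Concretely, I would first argue that the edges of this implicit search graph are the ``add a two-partition at a current leaf'' moves, so that a path from the start state (the root cluster $\dataset$ with no split chosen) to a goal state (a full binary tree) has total edge cost $\sum_{X_L,X_R\in\textsf{sibs}(\tree)}\EfunS(X_L,X_R)=\EfunT(\tree)$, matching Eq.~\eqref{eq:tree-cost}. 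The quantity $g_{LR}$ in Eq.~\eqref{eq:glr} is exactly the accumulated cost $g(\cdot)$ of the partial clustering reached, and $h_{LR}$ in Eq.~\eqref{eq:hlr} is the sum of per-leaf heuristic values $\Hfun$, which is the admissible $h(\cdot)$; hence $f_{LR}=g_{LR}+h_{LR}$ is the usual \astar priority. The min-heaps at the trellis nodes, taken together, encode precisely the set of $f$-values of all frontier states, and the ``peek'' at the root's heap after the leaf-to-root update loop returns the minimum $f$-value over the current frontier --- i.e., Algorithm~\ref{alg:tree-astar} always expands a minimum-$f$ frontier state, which is the defining property of \astar.

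Second, I would recall the hypothesis: $h$ is consistent/monotone, meaning $\Hfun(\dataset_\ell)\le \EfunS(\dataset_L,\dataset_R)+\Hfun(\dataset_L)+\Hfun(\dataset_R)$ for every admissible split of $\dataset_\ell$ into $\dataset_L,\dataset_R$ (the local triangle inequality specialized to our edges), which yields global consistency $h(x)\le c(x,y)+h(y)$ along every edge $x\to y$ of the search graph. The key structural fact I need is that under a consistent heuristic the $f$-values are nondecreasing along any path in the search graph; combined with Theorem~\ref{THM:CORECTNESS} (correctness, i.e.\ the algorithm does terminate and return $\tree^\star=\argmin_{\tree}\EfunT(\tree)$) and the standard argument, this gives: every state that \astar expands has $f$-value at most $C^\star:=\EfunT(\tree^\star)$, and conversely every state with $f(x)<C^\star$ must be expanded by \emph{any} admissible algorithm that is guaranteed to return an optimal solution and that has access only to the same $g$ and the same heuristic $h$. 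Then I invoke the classical theorem (Dechter--Pearl): among all such admissible algorithms, \astar with a consistent heuristic expands a (set-theoretically) minimal set of states, up to tie-breaking on states with $f(x)=C^\star$; this is precisely the definition of ``optimally efficient'' referenced in \S\ref{sec:methods}. Since Algorithm~\ref{alg:tree-astar} \emph{is} \astar in the sense established in the first paragraph, it inherits optimal efficiency.

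There are two wrinkles I would take care to address. First, the trellis is a DAG, not a tree, so the same \emph{cluster} $\dataset_i$ can be reached as a leaf of many different partial clusterings; I must check that storing a single min-heap per trellis node (rather than per search-graph state) does not cause the algorithm to skip or double-count frontier states, i.e.\ that the sub-problem ``best split of $\dataset_i$'' is genuinely independent of the surrounding partial clustering once $g$ is accounted for separately. This is exactly the optimal-substructure property that already underlies Theorems~\ref{THM:CORECTNESS}--\ref{THM:TIME}, so I would reuse it: the $g_{LR}$ contribution of a subtree depends only on the subtree, and the heap comparison only on $f_{LR}$, so the per-node heap correctly represents the relevant portion of the frontier. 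Second, I should state the tie-breaking caveat explicitly (as the footnote in \S\ref{sec:methods} already hints): optimal efficiency is ``up to tie-breaking among states with $f=C^\star$,'' and two runs of the algorithm may expand different such states; this is the standard caveat and does not weaken the claim.

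I expect the main obstacle to be the first paragraph rather than the second: rigorously establishing the dictionary between ``a state of textbook \astar'' and ``a partial hierarchical clustering recovered from the trellis via Eq.~\eqref{eq:tree_from_trellis},'' and in particular verifying that the leaf-exploration-plus-leaf-to-root-update step of Algorithm~\ref{alg:tree-astar} is equivalent to ``pop the global-minimum-$f$ state from the frontier and push its successors.'' Once that equivalence is nailed down, applying the Dechter--Pearl optimal-efficiency theorem under the consistency hypothesis is essentially a citation. I would therefore structure the proof as: (i) lemma --- Algorithm~\ref{alg:tree-astar} simulates \astar on the partial-clustering search graph; (ii) observe consistency of $h$ transfers to the graph edges; (iii) invoke the classical optimal-efficiency result.
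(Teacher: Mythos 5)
Your proposal takes essentially the same route as the paper's proof: establish that Algorithm~\ref{alg:tree-astar} (via the bijection of Eq.~\ref{eq:tree_from_trellis} and the nested min-heaps) faithfully simulates best-first \astar over the space of partial hierarchical clusterings, then inherit optimal efficiency from the classical result for \astar with a consistent heuristic. Your version is in fact more careful than the paper's --- notably in flagging the DAG/per-node-heap subtlety and the tie-breaking caveat, which the paper asserts without elaboration --- but the underlying argument is the same.
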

\vspace{-0.2cm}
See Appendix $\S$ \ref{prf:opt_efficiency} for proof.

\section{Trellis Construction}
\label{sec:construction}
When running \astar, the exploration step at node $\dataset_i$ requires the instantiation of a min heap of node $\dataset_i$'s children.
As described in \S \ref{sec:methods}, we can use Algorithm \ref{alg:tree-astar} to find exact solutions if we search the full trellis structure that includes all subsets of the dataset, $\PP(\dataset)$.
However, we are also interested in approximate methods. 
Here, we propose three approaches: (1) run \astar using a sparse trellis (one with missing nodes and/or edges), (2) extend a sparse trellis by adding nodes, and edges corresponding to child / parent relationships while running \astar at exploration time for each node explored during \astar search, (3) run \astar iteratively, obtaining a solution and then running \astar again, extending the trellis further between or during subsequent iterations.

\paragraph{Trellis Initialization}
\label{sec:trellis_init}
Before running \astar, an input structure defining the children / parent relationships and nodes in the trellis can be provided. One possibility is to use an existing method to create this structure. For instance, it is possible to initialize all the nodes coming from the full/partial beam size set of hierarchies obtained from running beam search. However, this structure can be updated during run-time using the method described below by adding additional children to a node beyond those present at initialization. This way, \astar will include within the search space, hierarchies coming from small perturbations (at every level) of the ones employed to initialize the trellis.

\paragraph{Running \astar While Extending The Trellis}
\label{sec:on_the_fly}

A sparse trellis (even one consisting solely of the root node) can be extended during the exploration stage of A* search. Given a node, $\dataset_i$, in a sparse trellis, sample the children to place on $\dataset_i$'s queue using an objective function-based sampling scheme.
It is reasonable to randomly sample a relatively large number of candidate children of the node and then either restrict the children to the $K$ best according to the value of the $\EfunS(\cdot, \cdot)$ function of the cost (Eq. \ref{eq:tree-cost}) (best $K$ sampling) or sample from the candidate children according to their relative probability, i.e., $\EfunS(\cdot, \cdot) / \sum{\EfunS(\cdot, \cdot)}$ (importance sampling). 

\paragraph{Iterative \astar-based Trellis Construction}
\label{sec:iterative}
We can combine the methods above, by initializing a sparse trellis, extending it during run-time, and then run an iterative algorithm that outputs a series of hierarchical clusterings with monotonically decreasing cost. It uses $\trellis^{(r)}$ as the initialization, runs \astar and outputs the best hierarchical clustering represented by that trellis. In each subsequent round $r+1$, $\trellis^{(r)}$ is extended at run-time, adding more nodes and edges, and at the end of the round, outputs the best hierarchical clustering represented by trellis $\trellis^{(r+1)}$. This can be repeated until some stopping criteria are reached, or until the full trellis is created.

\section{Jet Physics Experiments}
\label{sec:jet_experiments}
\textbf{Additional Background}. Detectors at the Large Hadron Collider (LHC) at CERN measure the energy (and momentum) of particles generated from the collision of two beams of high-energy protons. Typically, the pattern of particle hits will have localized regions. Recall the particles in each region are clustered (i.e., a jet), and this hierarchical clustering is originated by a {\it showering process} where the initial (unstable) particle (root) goes through successive binary splittings until reaching the final state particles (with an energy below a given threshold) that hit the detector and are represented by the leaves. These leaves are observed while the latent showering process, described by quantum chromodynamics, is not. This showering-process is encoded in sophisticated simulators, and rather than optimizing heuristics (as is traditionally done, e.g., ~\cite{ Cacciari:2008gp,Catani:1993hr,Dokshitzer:1997in, Ellis:1993tq}), directly maximizing the likelihood, (1) allows us to compute MAP hierarchical clusterings generated by these simulators, (2) unifies generation and inference, and (3) can improve our understanding of particle physics.

\textbf{Cost Function}. We use a cost function that is the negative log-likelihood of a model for jet physics \cite{ToyJetsShower}.
Each cluster, $\dataset$, corresponds to a particle with an energy-momentum vector $x = (E \in \RR^{+}, \vec{p} \in \RR^3)$ and squared mass ${t}(x) = {E^2-|\vec{p}|^2}$. 
A parent's energy-momentum vector is obtained from adding its children, i.e., $x_P = x_L + x_R$.
We study Ginkgo, a 
prototypical
model for jet physics \cite{ToyJetsShower} that provides a tractable joint likelihood, where for each pair of parent and left (right) child cluster with masses $\sqrt{t_P}$ and $\sqrt{t_L}$ ($\sqrt{t_R}$) respectively, the likelihood function is,
    \begin{align}\label{eq:ginkgoLikelihood}
        \EfunS(X_L, X_R)= f(t(x_L) | t_P, \lambda) \cdot f(t(x_R) | t_P, \lambda)\\ \text{with} \,\,\,\,\,\,
        f(t | t_P, \lambda) = \frac{1}{1 - e^{- \lambda}} \frac{\lambda}{t_P} e^{- \lambda \frac{t}{t_P}} \label{GinkgoSplitLH}
    \end{align}
    where the first term in $f(t | t_P, \lambda)$ is a normalization factor associated to the constraint that $t<t_P$, and $\lambda$ a constant.  
For the leaves, we need to integrate $f(t | t_P, \lambda)$ from 0 to the threshold constant $t_{cut}$ (see \cite{ToyJetsShower} for more details), 
\bea\label{leaves_lh}
 f(t_{cut} | \lambda, t_{\text{P}})=\frac{1}{1-e^{- \lambda}} \bigg(1-e^{- \frac{\lambda}{ t_{\text{P}}} t_{\text{cut}}}\bigg) 
\eea
    
\textbf{Heuristic function}. We introduce two heuristic functions to set an upper bound on the log likelihood of Ginkgo hierarchies, described in detail in Appendix $\S$ \ref{app:ginkgoHeuristics}. We provide a brief description as follows. We split the heuristic into internal nodes and leaves.
For the leaves, we roughly bound $t_{\text{P}}$ in Equation \ref{leaves_lh} by the minimum squared mass, among all the nodes with two elements, that is greater than $t_{cut}$ (see \cite{ToyJetsShower} for more details).
For internal nodes, we wish to find the smallest possible upper bound to Equation \ref{GinkgoSplitLH}. We bound $t_{\text{P}}$ in the exponential by the squared mass of the root node (top of the tree).
Next, we look for the biggest lower bound on the squared mass  $t$ in Equation \ref{GinkgoSplitLH}.
We consider a fully unbalanced tree as the topology with the smallest per level values of $t$ and we bound each level in a bottom up approach (singletons are at the bottom).  
Finally, to get a bound for $t_{\text{P}}$ in the denominator of Equation \ref{GinkgoSplitLH} we want the minimum possible values. Here we consider two heuristics:
\begin{itemize}
    \item Admissible \hzero. We take the minimum per level parent squared mass plus the minimum leaf squared mass, as the parent of every internal node has one more element.
    
    \item Approximate \hone. We take the minimum per level parent squared mass plus 2 times the minimum squared mass  among all the nodes with two elements.
\end{itemize}

In principle, \hone is approximate, but we studied its effectiveness by checking that the cost was always below the exact trellis MAP tree on a dataset of 5000 Ginkgo jets with up to 9 elements, as well as the fact that exact \astar with \hone agrees with the exact trellis within 6 significant figures (see Figure \ref{fig:ginkgoCost}). In any case, if for some cases \hone is inadmissible, the only downside is that we could be missing a lower cost solution for the MAP tree. As a result, given that \hone is considerably faster than \hzero, below we show results for \astar implemented with \hone.

{\bf Data and Methods}. We compare our algorithm with different benchmarks. We start with greedy and beam search implementations as baselines, where we cluster Ginkgo jet constituents (leaves of a binary tree) based on the joint likelihood to get the maximum likelihood estimate (MLE) for the hierarchy that represents the latent structure of a jet. Greedy simply chooses the pairing of nodes that locally maximizes the likelihood at each step, whereas beam search maximizes the likelihood of multiple steps before choosing the latent path. The current implementation only takes into account one more step ahead, with a beam size given by 1000 for datasets with more than 40 elements or $\frac{N(N-1)}{2}$ for the others, with $N$ the number of elements to cluster. Also, we discarded partial clusterings with identical likelihood values, to account for the different orderings of a hierarchy (see \cite{boyles2012time} for more details), which significantly improved the performance of beam search.
As a stronger baseline, we implement the Monte-Carlo Tree Search (MCTS) algorithm for hierarchical clusterings introduced in \cite{Brehmer:2020brs}. We choose the best performing case, that corresponds to a neural policy trained for 60000 steps, initializing the search tree at each step by running beam search with beam size 100. Also, the maximum number of MCTS evaluation roll-outs is 200.
Finally, for exact solutions, we compare with the exact hierarchical cluster trellis algorithm introduced in \cite{greenberg2020compact}. 

\textbf{Approximate \astar}. We design the approximate \astar-based method following the details described in Section \ref{sec:construction}. We first initialize the sparse trellis the full set of beam search hierarchies (\S \ref{sec:trellis_init}). Next, we use a top K-based sampling (\S \ref{sec:on_the_fly}) and the iterative construction procedure (\S \ref{sec:iterative}) to extend the search space.

Figure \ref{fig:ginkgoCost} shows a comparison of the MAP values of the proposed exact and approximate algorithms using \astar with benchmark algorithms (greedy, beam search, MCTS, and exact trellis), on Ginkgo jets. For the \astar-based method we show both the exact algorithms and approximate solutions, both implemented with the heuristic denoted as \hone in Appendix $\S$ \ref{app:ginkgoHeuristics}. 
We want to emphasize that approximate versions of \astar allow the algorithm to handle much larger datasets while significantly improving over beam search and greedy baselines. MCTS provides a strong baseline for small datasets, where it is feasible to implement it. However, \astar shows an improvement over MCTS while also being feasible for much larger datasets.
\begin{figure}
    \centering
    \includegraphics[width=0.65\textwidth]{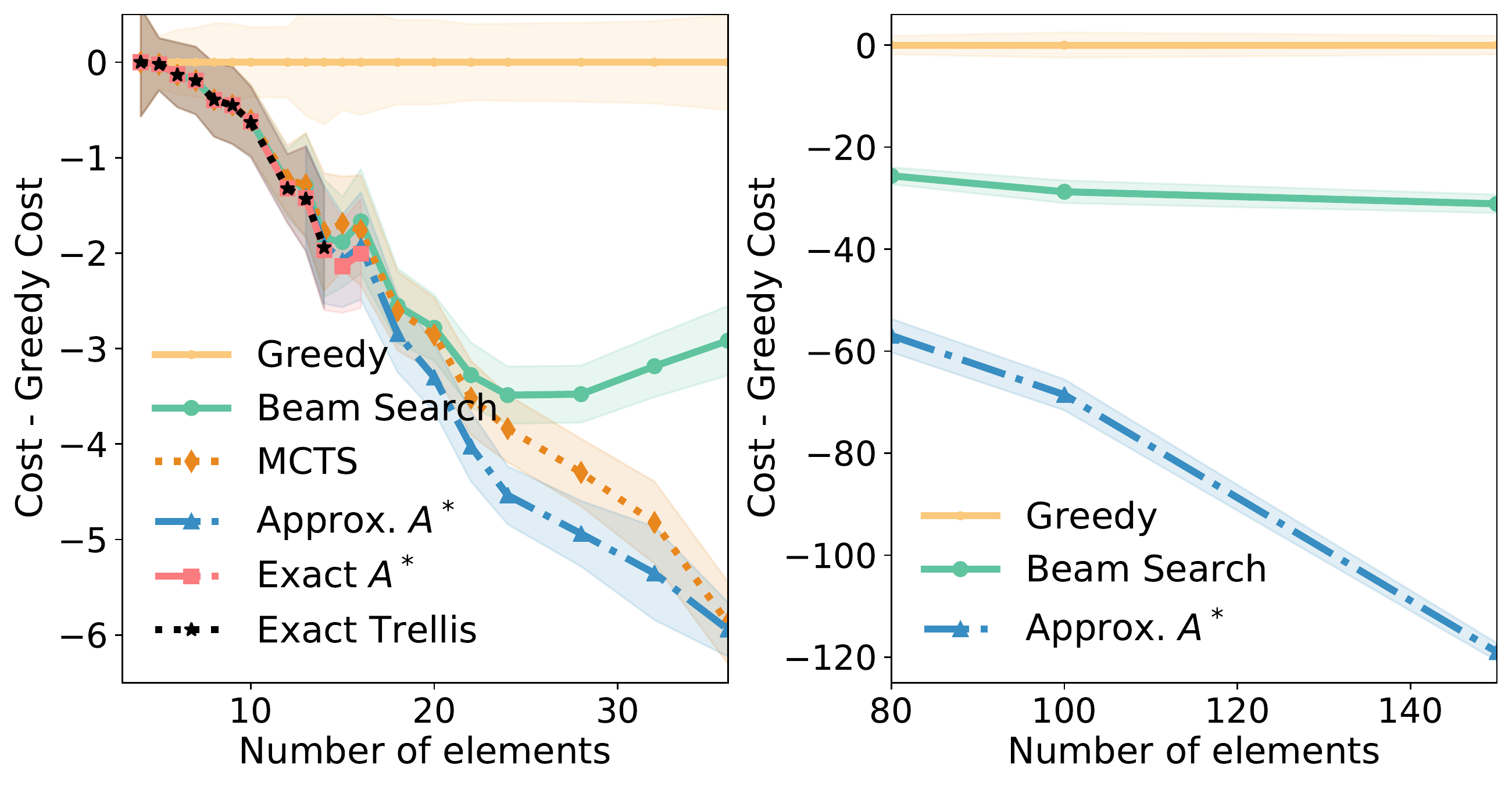}
    \vspace{-0.3cm}
    \caption{\small{\textbf{Jet Physics Results.} Cost (Neg. log. likelihood)  for the MAP hierarchy of each algorithm on Ginkgo datasets minus the cost for greedy (lower values are better solutions). We see that both exact and approx. \astar greatly improve over beam search and greedy. Though MCTS provides a strong baseline for small datasets where it is feasible to implement it (left), \astar still shows an improvement over it.}}
    \label{fig:ginkgoCost}
\end{figure}
Next in Figure \ref{fig:ginkgo_runtime}  we show the empirical running times.
We want to point out that the \astar-based method becomes faster than the exact trellis one for data with 12 or more elements, which makes \astar feasible for larger datasets. 
We can see that approx. \astar follows the exact \astar cost very closely starting at 12 elements until 15, while having a lower running time. Though approx. \astar and MCTS have a running time of the same order of magnitude (only evaluation time for MCTS) for more than 20 leaves, \astar has a controlled running time while MCTS evaluation time grows exponentially.
\begin{figure}[!htbp]
    \centering
    \includegraphics[width=0.65\textwidth]{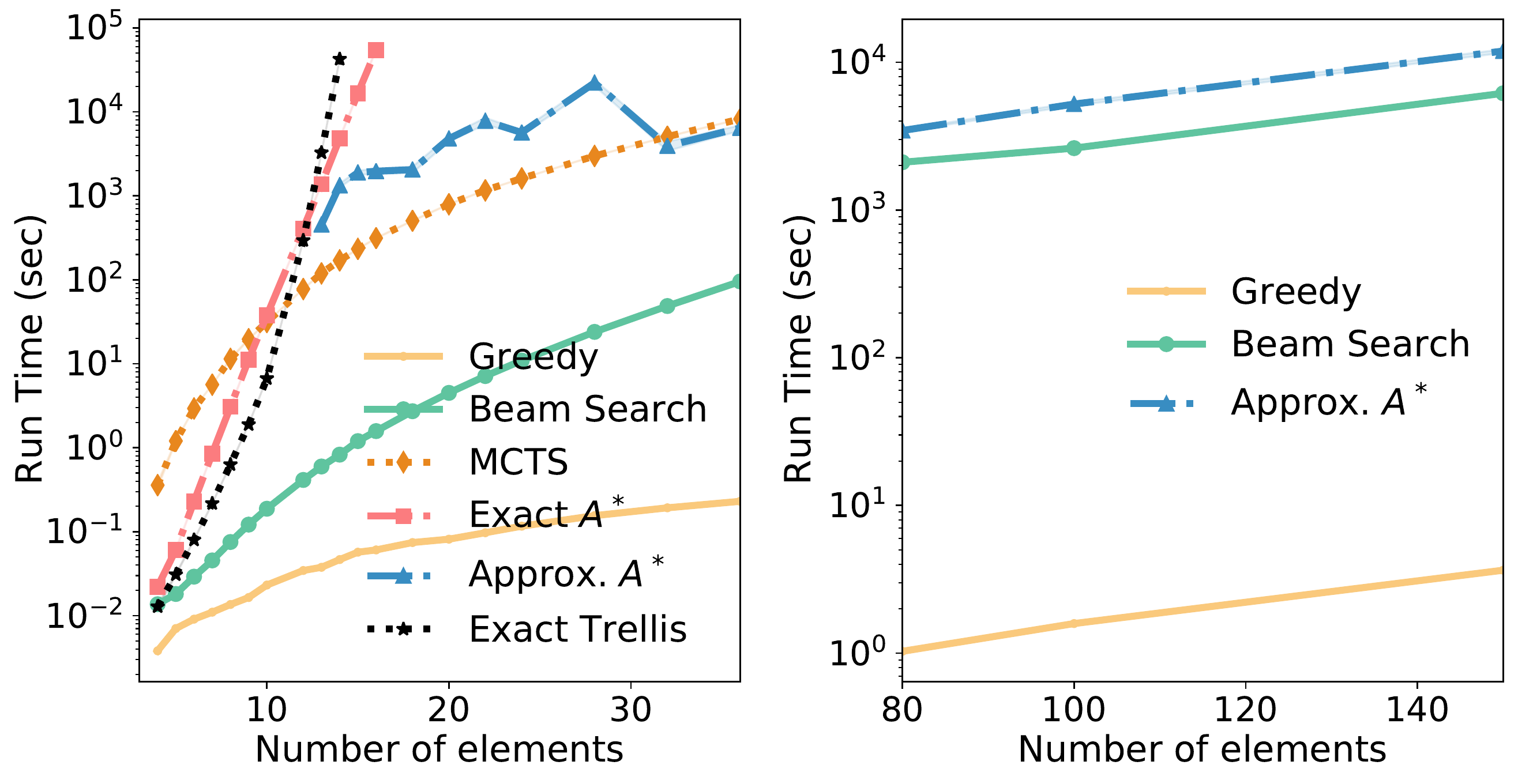}
      \vspace{-0.3cm}
    \caption{\small{\textbf{Running Times.} Empirical running time on an Intel Xeon Platinum 8268 24C 2.9GHz Processor. The \astar-based method becomes faster than the exact trellis one for data with 12 or more elements. Approx. \astar is much faster than exact algorithms while still significantly improving over the baselines. This figure only shows evaluation running times for MCTS, after having trained the model for 7 days with the same processor. Evaluation MCTS times grow exponentially while approx. \astar has a controlled run time.}}
    \label{fig:ginkgo_runtime}
\end{figure}

The exact trellis time complexity is $\mathcal{O}(3^N)$ which is super-exponentially more efficient than brute force methods that consider every possible hierarchy and grow at the rate of $(2N-3)!!$. 
Thus, in Figure \ref{fig:ginkgo_complexity} we show the number of hierarchies explored by \astar divided by $3^N$, and we can see that exact and approximate \astar are orders of magnitude more efficient than the trellis, which to the best of our knowledge was the most efficient exact algorithm at present. 
Also, we note as a point of reference that for about 100 leaves, the approx. \astar results of Figure \ref{fig:ginkgo_complexity} can improve over benchmarks by only exploring  $\mathcal{O}(10^{-180})$ of the possible hierarchies.
\begin{figure}
    \centering
    \includegraphics[width=0.65\textwidth]{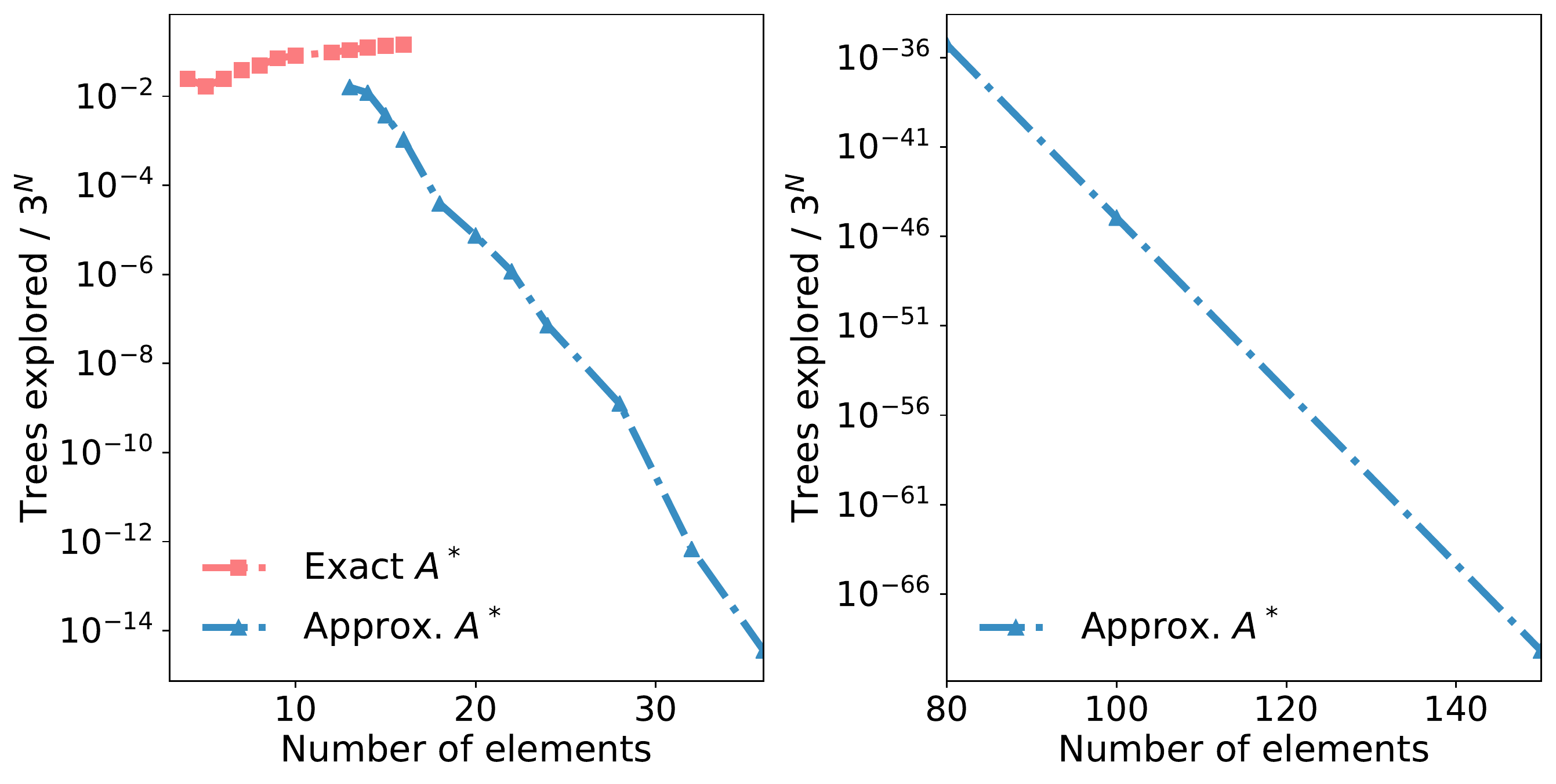}
      \vspace{-0.3cm}
    \caption{\small{\textbf{Search Space Exploration.} Number of hierarchies explored divided by the exact trellis time complexity, i.e $3^N$. We see that the \astar-based method is considerably more efficient than the trellis, which, in turn, is super-exponentially more efficient than an explicit enumeration of clusterings.}}
    \label{fig:ginkgo_complexity}
\end{figure}

\begin{figure*}
    \centering
    \includegraphics[width=0.24\textwidth]{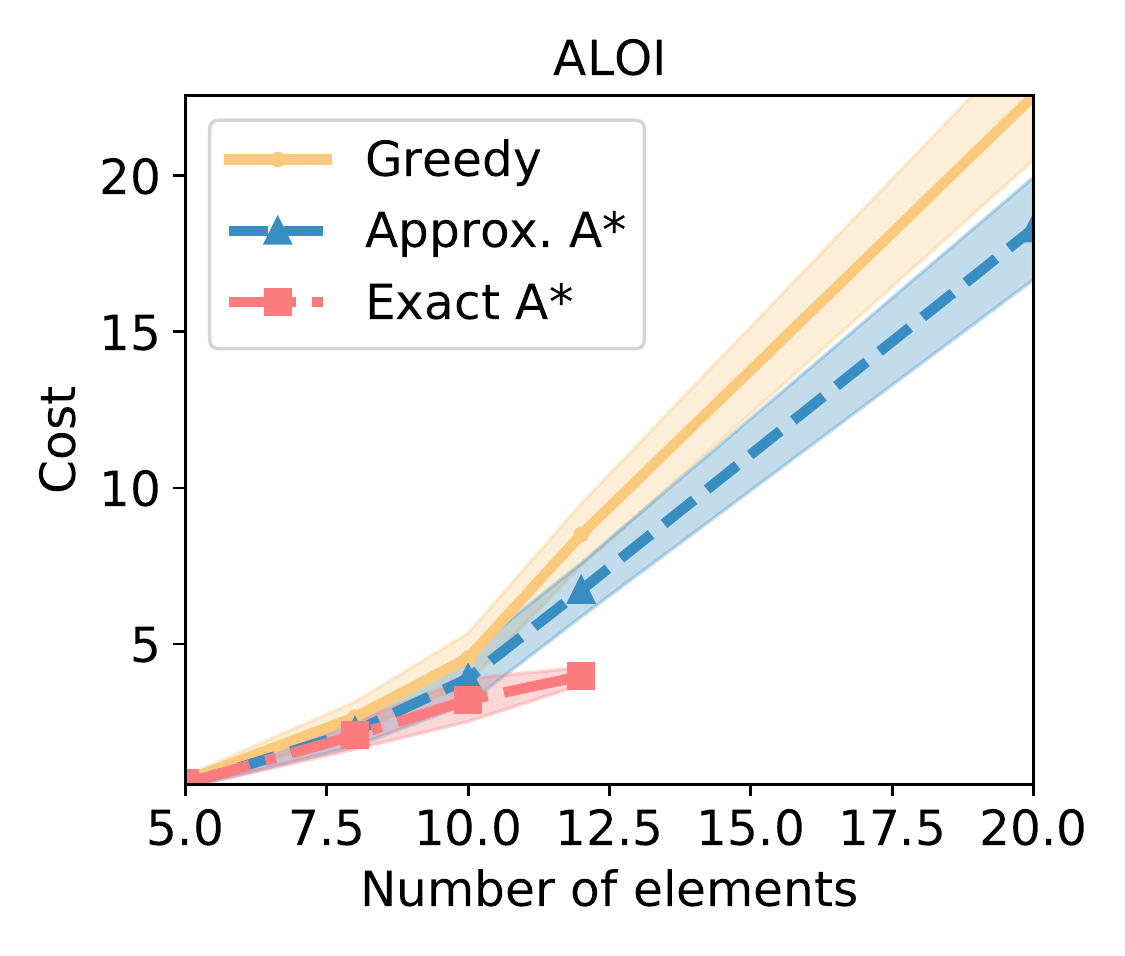}
    \includegraphics[width=0.24\textwidth]{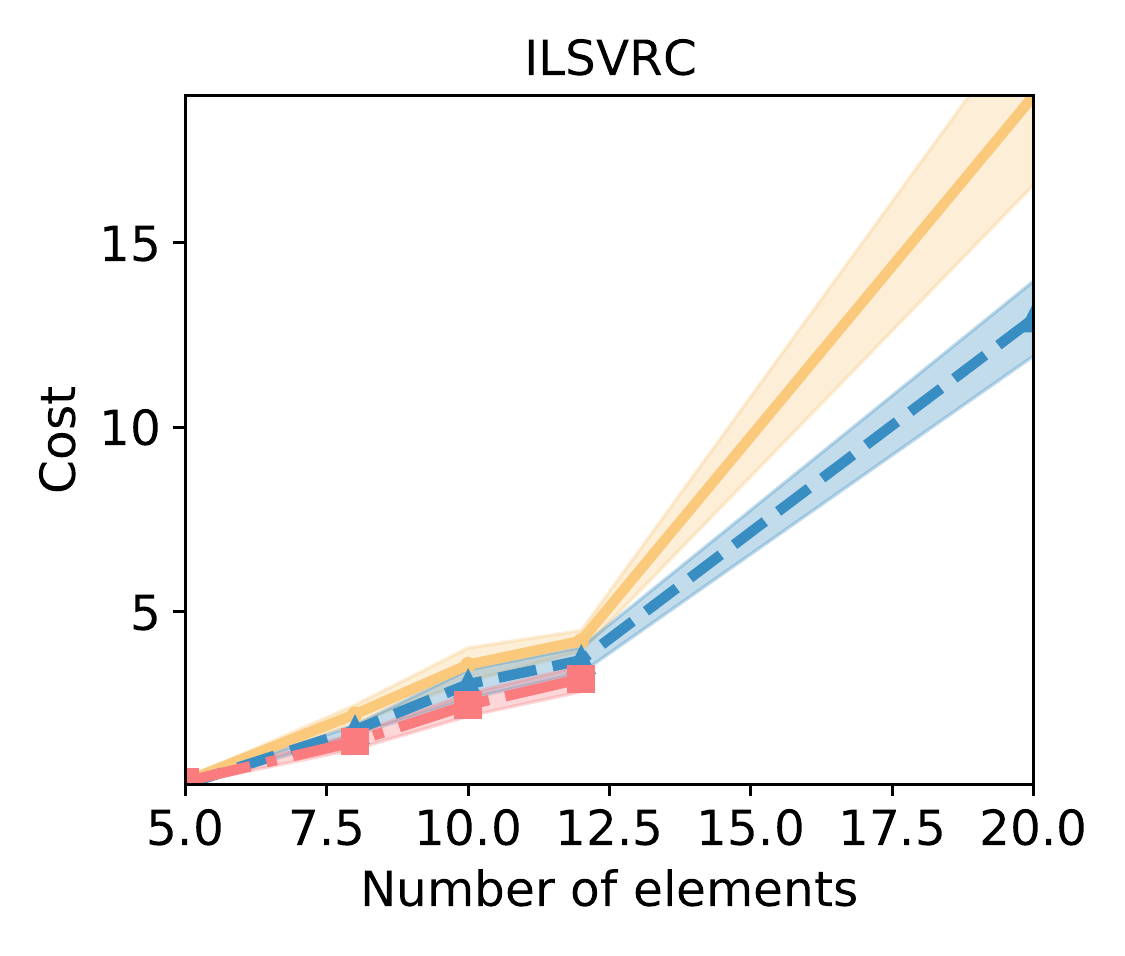}
    \includegraphics[width=0.24\textwidth]{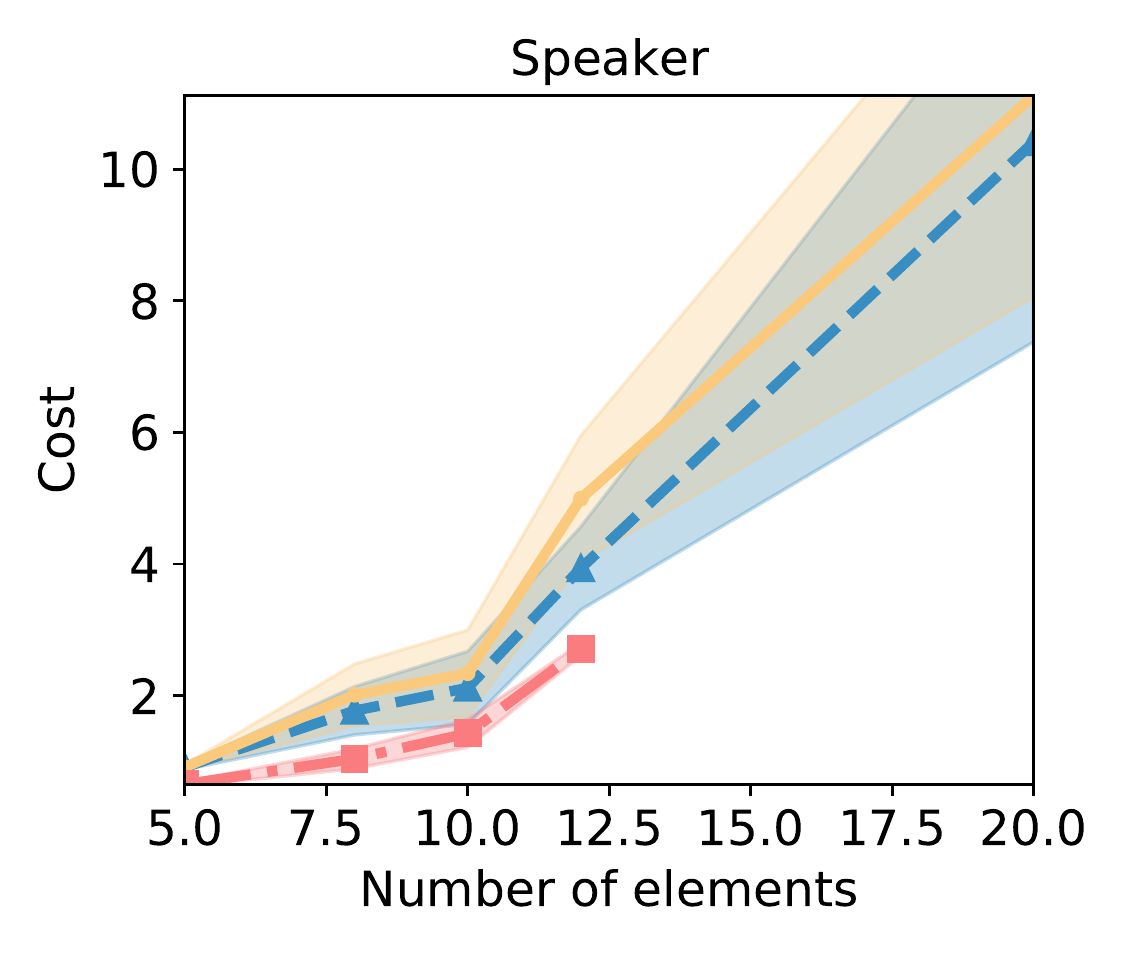}
    \includegraphics[width=0.24\textwidth]{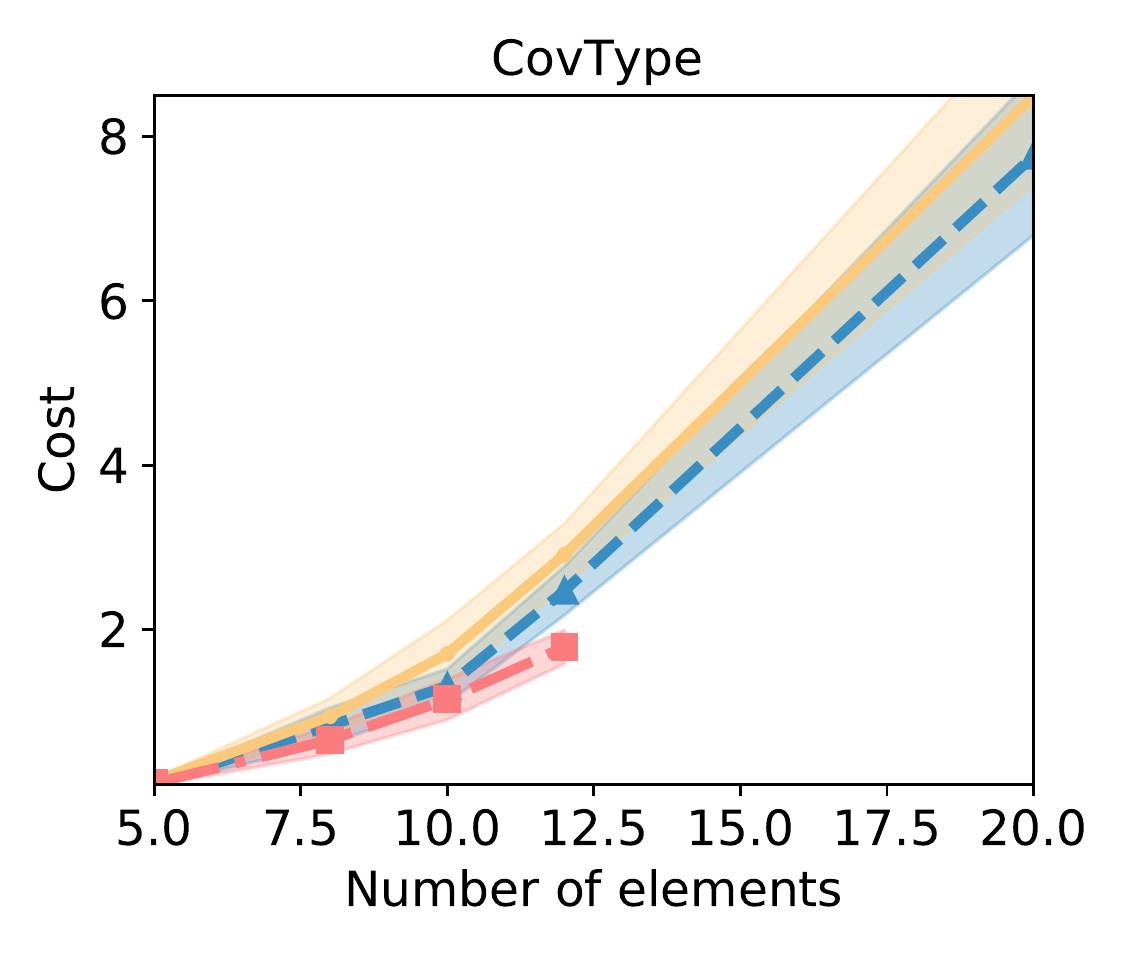}
    \caption{\textbf{Hierarchical Correlation Clustering on Benchmark Data}. On randomly sampled subsets small enough for us to run the exact \astar approach, we report the hierarchical correlation clustering cost (lower is better) for clustering benchmarks. We observe that the approximate \astar method is able to achieve lower cost clusterings compared to greedy.}
    \label{fig:benchmarks}
\end{figure*}

\begin{figure*}
    \centering
    \includegraphics[width=0.24\textwidth]{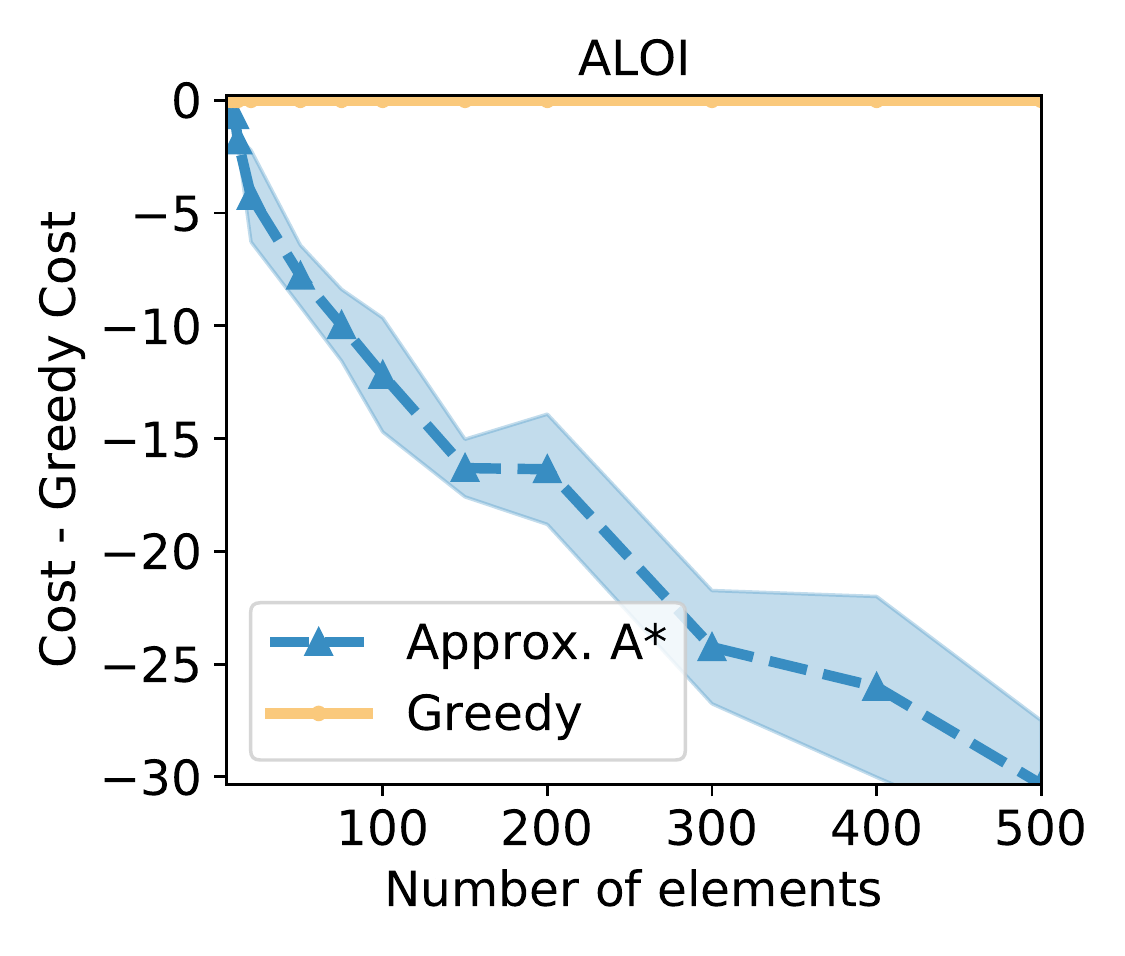}
    \includegraphics[width=0.24\textwidth]{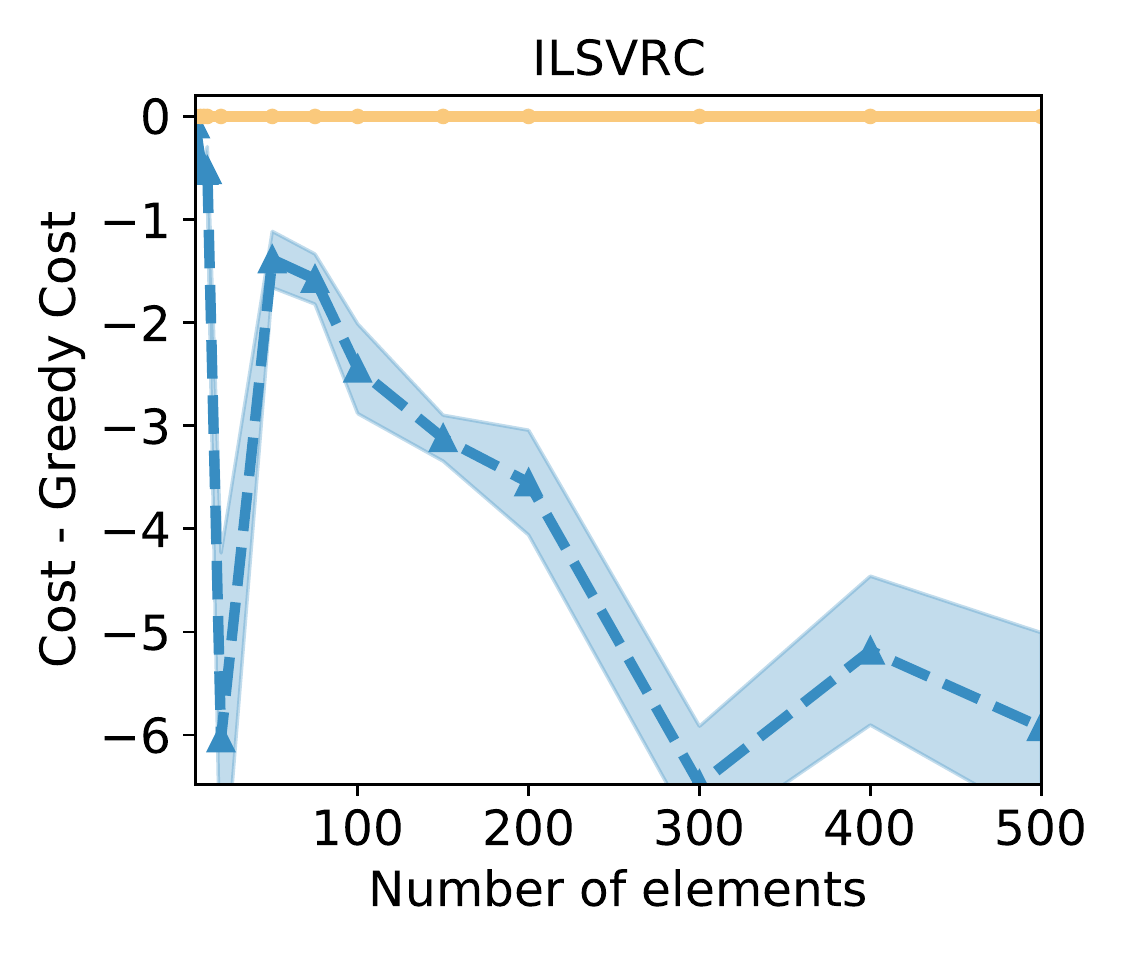}
    \includegraphics[width=0.24\textwidth]{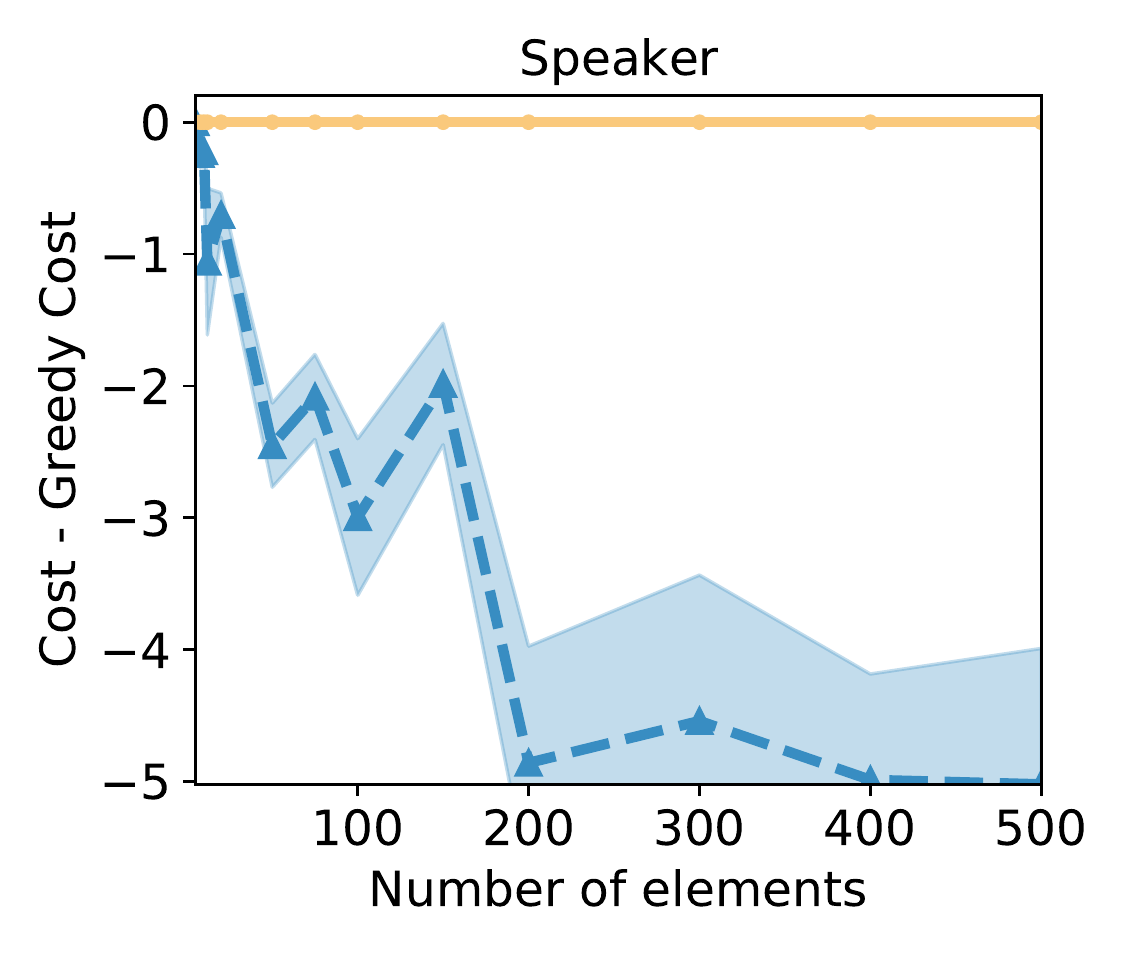}
    \includegraphics[width=0.24\textwidth]{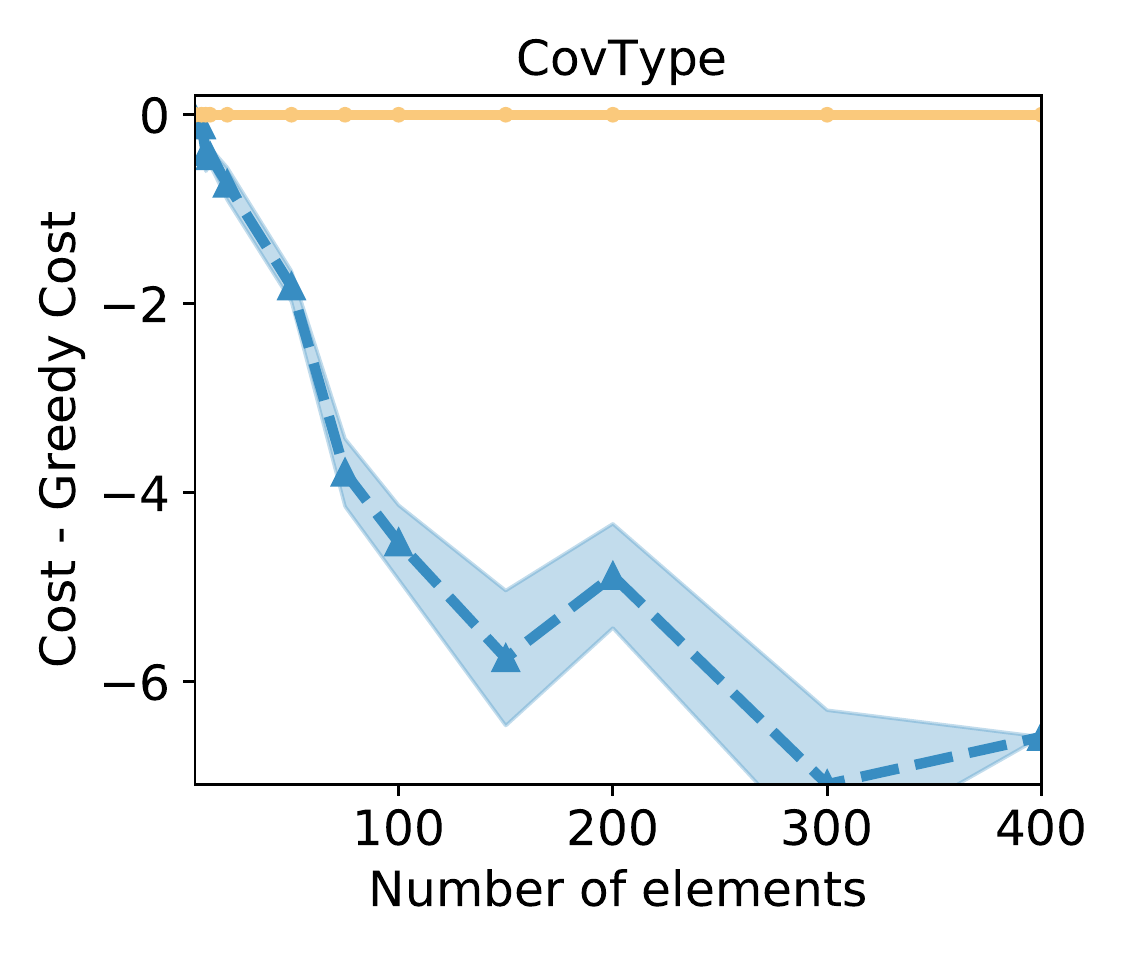}
    \caption{\textbf{Improvement over Greedy on Larger Benchmark Datasets }. We report the improvement (cost, lower is better) of the approximate \astar method over the greedy baseline. }
    \label{fig:larger}
\end{figure*}

\section{Experiments on Clustering Benchmarks}
\label{sec:benchmarks}
In this section, we analyze the performance of \astar
on several commonly used hierarchical clustering benchmark datasets
\cite{kobren2017hierarchical,monath2019scalable}:  
    \textbf{Speaker},  i-vector speaker recordings, ground truth clusters refer each unique speaker \cite{greenberg2014nist}; \textbf{CovType}, a dataset of forest cover types;
    \textbf{ALOI}, color histogram features of 3D objects, ground truth clusters refer to each object type \cite{geusebroek2005amsterdam};
     \textbf{ILSVRC}, image features from InceptionV3 \cite{szegedy2016rethinking}  of the ImageNet ILSVRC 2012 dataset \cite{russakovsky2015imagenet}.  
     
In this experiment, we use cosine similarity as the measure of the pairwise similarity between points, as it is known to be meaningful for each of the datasets \cite{monath2019scalable}, and hierarchical correlation clustering as 
the cost function: 

\begin{objective}
\label{ex:hcc}
\textbf{\emph{(Hierarchical Correlation Clustering)}}
Following \cite{greenberg2020compact}, we consider a hierarchical version of well-known flat clustering objective, correlation clustering  \cite{bansal2004correlation}. 
 In this case, we define the cost of a pair of sibling nodes in the tree to be the sum of the positive edges crossing the cut, minus the sum of the negative edges not crossing the cut: 
{\small
\begin{align}
    \EfunS(X_i,X_j) = 
    \sum_{\mathclap{\substack{x_i, x_j \in X_i \times X_j}}} w_{ij} \II[w_{ij} > 0] & + 
    \sum_{\mathclap{\substack{x_i,x_j \in X_i \times X_i, \\ i < j}}} |w_{ij}| \II[w_{ij} < 0]  +
    \sum_{\mathclap{\substack{x_i,x_j \in X_j \times X_j, \\ i < j}}} |w_{ij}| \II[w_{ij} < 0]
\end{align}
}
where $w_{ij}$ is the affinity between $x_i$ and $x_j$. 
\end{objective}

To build the weighted graphs needed as input to correlation clustering, we subtract the mean similarity from each of the pairwise similarities. 

We can provide a lower bound for this objective  by using the sum of the of the positive edges contained in the dataset,
\begin{equation}\label{eq:hcc_heuristic1}
    h_\textsf{cc}(\dataset) = 
    \sum_{\mathclap{\substack{x_i,x_j \in X \times X, \\ i < j}}} w_{ij} \II[w_{ij} > 0].
\end{equation}

\begin{proposition}{\textbf{\emph{(Admissible Heuristic for Hierarchical Correlation Clustering)}}}\label{THM:HCC_HEURISTIC1}
Equation \ref{eq:hcc_heuristic1} is an admisslbe heuristic for Hierarchical Correlation Clustering cost, that is $ h_\textsf{cc}(\dataset) \leq \EfunT_\textsf{HCC}(\dataset)$.
\end{proposition}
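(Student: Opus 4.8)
The plan is to prove the stronger statement that $\EfunT_\textsf{HCC}(\tree) \ge h_\textsf{cc}(\dataset)$ for \emph{every} $\tree \in \alltrees(\dataset)$; since $\EfunT_\textsf{HCC}(\dataset)$ denotes the cost of the optimal tree (the quantity the heuristic must underestimate), the claimed inequality is then immediate by taking the minimizer. The engine of the argument is the standard fact about binary hierarchical clusterings: for each unordered pair of leaves $\{x_i,x_j\}$ there is a unique cluster $X \in \tree$ that is the smallest element of $\tree$ containing both, and if $(X_L,X_R)$ is its pair of children then exactly one of $x_i,x_j$ lies in $X_L$ and the other in $X_R$. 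This follows from the laminar structure in the definition of hierarchical clustering (any two clusters are nested or disjoint, and every non-leaf cluster is the disjoint union of its two children): at any other sibling pair $(Y_L,Y_R)\in\textsf{sibs}(\tree)$ the two leaves are either both outside $Y_L\cup Y_R$ or both contained in one of $Y_L,Y_R$, so $\{x_i,x_j\}$ ``crosses the cut'' at $(X_L,X_R)$ and at no other sibling pair.

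First I would expand $\EfunT_\textsf{HCC}(\tree) = \sum_{(X_L,X_R)\in\textsf{sibs}(\tree)} \EfunS(X_L,X_R)$ using the three-term form of $\EfunS$ from Objective~\ref{ex:hcc} and regroup. Summing the first term (positive edges crossing the cut) over all sibling pairs counts each positive weight $w_{ij}$ exactly once, namely at the sibling pair sitting at the least common ancestor of $x_i$ and $x_j$; hence this part of the sum equals exactly $\sum_{i<j} w_{ij}\,\II[w_{ij}>0] = h_\textsf{cc}(\dataset)$. Summing the remaining two terms (negative edges not crossing the cut, on the $X_L$ and $X_R$ sides) contributes only quantities of the form $|w_{ij}|\,\II[w_{ij}<0] \ge 0$, so the total contribution of these terms is non-negative. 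Adding the pieces gives $\EfunT_\textsf{HCC}(\tree) = h_\textsf{cc}(\dataset) + (\text{nonnegative}) \ge h_\textsf{cc}(\dataset)$, as desired.

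The only real obstacle is the bookkeeping behind ``each positive edge is counted exactly once'': I would make this precise by exhibiting the map that sends a pair $\{x_i,x_j\}$ to the sibling pair at its least common ancestor, checking it is well defined (the LCA cluster exists and is unique because $\dataset\in\tree$ and clusters are laminar) and injective-on-fibers in the sense that a fixed sibling pair $(X_L,X_R)$ receives exactly the pairs with one endpoint in $X_L$ and one in $X_R$. A secondary remark, worth one sentence, is that the heuristic is invoked in A* at arbitrary trellis nodes $\dataset_\ell$ (cf.\ Eq.~\eqref{eq:hlr}), and the identical argument with $\dataset$ replaced by $\dataset_\ell$ shows $h_\textsf{cc}(\dataset_\ell)$ lower-bounds the optimal HCC cost of any subtree over $\dataset_\ell$, which is exactly the admissibility condition A* requires. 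No estimates beyond nonnegativity of $|w_{ij}|$ are needed.
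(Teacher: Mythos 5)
Your proposal is correct and follows essentially the same route as the paper's proof: expand $\EfunT_\textsf{HCC}(\tree)$ over sibling pairs, observe that the positive crossing-edge terms sum to exactly $h_\textsf{cc}(\dataset)$ because each pair of points is separated at exactly one cut, and drop the nonnegative negative-edge terms. Your extra care in justifying the ``counted exactly once'' step via the least-common-ancestor map, and the remark about admissibility at arbitrary trellis nodes, only make explicit what the paper leaves implicit.
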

\vspace{-0.2cm}
See Appendix \S\ref{prf:hcc-heuristic1} for proof.

\textbf{Approximate \astar}. In this experiment, we apply the techniques described in Section \ref{sec:construction} to design an approximation algorithm. We first initialize the sparse trellis using a greedy approach (\S \ref{sec:trellis_init}). We then use a top K-based sampling (\S \ref{sec:on_the_fly}) and iterative construction approach (\S \ref{sec:iterative}) to extend the search space beyond greedy initialization and use heuristic $h_\textsf{cc}$.

Figure \ref{fig:benchmarks}, shows the cost (lower is better) of the exact solution found via \astar, the aforementioned approximate \astar, and the greedy solution on small datasets. We sample each of the small datasets from their corresponding original dataset using stratified sampling, where the strata correspond to the ground truth class labels. We report the mean and standard deviation / sqrt(num runs) computed over 5 random samples of each dataset size. We observe that the approximate method is able to provide 
hierarchical clusterings that have lower
cost than the greedy approach, nearing
the exact MAP values found by the optimal approach. In Figure \ref{fig:larger}, we show results on  data sets with a larger number of samples (sampled from the original datasets in the same manner as before), which are too large to run the exact algorithm. We plot the reduction (mean, standard deviation / sqrt(num runs) over 5 runs per dataset size) in cost of the clusterings found by the approximate \astar algorithm vs. the greedy approach. 
\vspace{-0.4cm}

\section{Related Work}
An \astar-based approach is used in \cite{daume2007fast} to 
find MAP solutions to Dirichlet Process Mixture Models. This work focuses on flat clustering rather than hierarchical, and 
it does not include a detailed analysis of the time and space complexity, as is done in this paper. 
In contrast to our deterministic search-based 
approach, sampling-based methods, such as \cite{bouchard2012phylogenetic,knowles2011pitman,ghahramani2010tree,boyles2012time}, use MCMC, Metropolis Hastings or Sequential Monte Carlo methods to draw samples from a posterior distribution over trees.
Other methods use incremental constructions of tree structures %\kyle{reference\craig{fixed}}
\cite{kobren2017hierarchical,monath2019scalable,zhang1996birch}, similar to our approach, however they perform tree re-arrangements to overcome greedy decisions made in the incremental setting. These operations are made in a relatively local fashion without considering a global cost function for tree structures as is done in our paper. 
Linear, quadratic, and semi-definite 
programming have also been used to solve hierarchical clustering
problems \cite{gilpin2013formalizing,roy2017hierarchical}, as have, branch-and-bound based methods \cite{cotta2003memetic}. These methods often have approximation guarantees. 
It would be interesting to discover how these methods could
be used to explore nodes in a trellis structure (\S \ref{sec:on_the_fly}) in future work. 
In contrast to our search-based method that considers discrete tree structures, recent work has considered continuous representations of tree structure to support gradient-based optimization \cite{monath2019gradient,chami2020trees}. 
{Finally, we note that much recent work,  \cite{charikar2017approximate,charikar2019hierarchical,chatziafratis2018hierarchical,cohen2017hierarchical,cohen2019hierarchical,roy2017hierarchical}, has been devoted to Dasgupta's cost for hierarchical clustering \cite{dasgupta2016cost}. We direct the interested reader to the Appendix \S \ref{app:dasgupta}, for a description of the cost and an admissible heuristic that enables the use of \astar to find optimal Dasgupta cost hierarchical clusterings.
}
\vspace{-0.4cm}

\section{Conclusion}
In this paper, we present a way to run \astar to provide
exact and approximate hierarchical clusterings. We describe
a data structure with a nested representation of the \astar 
search space to provide a time and space efficient approach 
to search. We demonstrate the effectiveness of the approach 
experimentally and prove theoretical results about its optimality
and efficiency. In future work, we hope to extend the same
kinds of algorithms to flat clustering costs as well as to probabilistic
models in Bayesian non-parametrics.

\section*{Acknowledgements}

Kyle Cranmer and Sebastian Macaluso are supported by the National Science Foundation under the awards ACI-1450310 and OAC-1836650 and by the Moore-Sloan data science environment at NYU. Patrick Flaherty is supported in part by NSF HDR TRIPODS award 1934846. Andrew McCallum and Nicholas Monath are supported in part by the Center for Data Science and the Center for Intelligent Information Retrieval, and in part by the National Science Foundation under Grant No. NSF-1763618. Any opinions, findings and conclusions or recommendations expressed in this material are those of the authors and do not necessarily reflect those of the sponsor.

\bibliography{references}
\clearpage
\appendix
\section{Appendix}
\subsection{Theorem \ref{THM:CORECTNESS} (Correctness of A* MAP) Proof}
\label{prf:correctness}
\begin{proof}
Assume toward contradiction that Algorithm \ref{alg:tree-astar} does not return the optimal hierarchical clustering, $\tree^\star$. 
It is clear from the conditional in line 9, $|\textsf{lvs}(\tree_\argmaxsplit(\dataset))| = |\dataset|$ 
and the fact that if $\dataset_L, \dataset_R$ are children of $\dataset$ then $\dataset_L \bigcup \dataset_R = \dataset$ and $\dataset_L \bigcap \dataset_R = \varnothing$, 
that Algorithm \ref{alg:tree-astar} returns a hierarchical clustering, $\tree_\argmaxsplit(\dataset) \in \alltrees(\trellis)$, so the assumption is that $\tree_\argmaxsplit(\dataset)$ is not optimal, i.e., $\EfunT(\tree^\star) < \EfunT(\tree_\argmaxsplit(\dataset))$.  
Consider the set, $S$, of data sets present in both $\tree^\star$ and $\tree_\argmaxsplit(\dataset)$ that branch to different children, i.e., $S = \{X_i | X_{L^\star},X_{R^\star} \in \tree^\star \land 
X_L,X_R \in \tree_\argmaxsplit(\dataset) \land 
X_{L^\star} \bigcup X_{R^\star} = X_L \bigcup X_R = X_i \land X_{L^\star} \neq X_L \neq X_R \neq X_{R^\star} \}$.
The set $S$ must be non empty, otherwise $\tree_\argmaxsplit(\dataset) = \tree^\star$, a contradiction.
Now consider the set, $T$, of ancestors of $S$, i.e., $T = \{ \dataset_i | \dataset_i \in S \land \nexists X_j \in S$ s.t. $X_i \neq X_j \land X_i \bigcap X_j = X_i \}$.
Since $S$ is non-empty, the set $T$ must also be non empty.
Now select a dataset, $X_i \in T$, such that the sub-hierarchy rooted at $X_i$ in $\tree^\star$,  $\tree^\star(\dataset_i)$, has lower energy than the sub-hierarchy rooted at $X_i$ in $\tree_\argmaxsplit(\dataset)$, $\tree_\argmaxsplit(\dataset_i)$, i.e., $\EfunT(\tree^\star(\dataset_i)) < \EfunT(\tree_\argmaxsplit(\dataset_i))$. 
There must exist at lease one such $X_i$, otherwise $\EfunT(\tree_\argmaxsplit(\dataset)) <= \EfunT(\tree^\star)$, a contradiction.

Now consider the trellis vertex corresponding to data set $\dataset_i$, $\trellisVertex_i$, with min heap $\dataset_i[\frontier]$, and the tuple at the top of $\dataset_i[\frontier]$, $(f_i, g_i, h_i, X_L, X_R)$. Note that because $h=0$ in line 9, that $h_i = 0$, therefore the $f_i$ is equal to the energy of the sub-hierarchy of $\tree_\argmaxsplit(\dataset)$ rooted at $\dataset_i$, $\EfunT(\tree_\argmaxsplit(\dataset_i))$.
Note also that $\dataset_i$'s children in $\tree^\star$, $ X_{L^\star},X_{R^\star}$, must be present in $\dataset_i[\frontier]$, along with corresponding $f^\star, g^\star, h^\star$ values, otherwise $\tree^\star$ is not represented in the trellis, a contradiction.
Since $X_{L^\star} \neq X_L \neq X_R \neq X_{R^\star}$, the tuple $(f^\star, g^\star, h^\star, X_{L^\star},X_{R^\star})$ is not at the top of $\dataset_i[\frontier]$. 
If $h^\star = 0$, then $f^\star = \EfunT(\tree^\star(\dataset_i))$ and $f <= f^*$, which implies $\EfunT(\tree_\argmaxsplit(\dataset_i)) <= \EfunT(\tree^\star(\dataset_i))$, a contradiction.
If $h^\star \neq 0$, then $f^\star <= \EfunT(\tree^\star(\dataset_i))$, otherwise $\Hfun$ is not an admissible heuristic, a contradiction.
This gives us 
$\EfunT(\tree_\argmaxsplit(\dataset_i)) = f < f^\star <= \EfunT(\tree^\star(\dataset_i))$, a contradiction.
\end{proof}

\subsection{Corollary \ref{THM:OPTIMAL} (Optimal Clustering) Proof}
\label{prf:optimal}
\begin{proof}
Theorem \ref{THM:CORECTNESS} states that Algorithm \ref{alg:tree-astar} returns the optimal hierarchical clustering, thus proving that all hierarchical clusterings are represented in a full trellis would prove the corollary.

Assume toward contradiction that there is a hierarchical clustering, $\tree^\star$, that is not present in a full trellis, $\trellis$.  
This implies either that (1) there is a data set $\dataset_i \in \tree^\star$ that is not in the trellis, i.e., $\dataset_i \not\in \trellis$, or that (2) there exists data sets $\dataset_i, \dataset_j, \dataset_k \in \tree^\star$ s.t. $\dataset_i = \dataset_j \bigcup \dataset_k \land \dataset_j \bigcap \dataset_k = \varnothing$ and $\dataset_i, \dataset_j, \dataset_k \in \trellis$, but $\dataset_j, \dataset_k \not\in \children(\dataset_i)$. 
Regarding (1), $\dataset_i \in \tree^\star$ s.t. $\dataset_i \not\in \trellis$ implies that $\dataset \not\in \powerset(\dataset)$, a contradiction.
Regarding (2), since each node in a full trellis has all possible children, i.e., $\forall \trellisVertex_i \in \trellis, \children(\trellisVertex_i) = \powerset(\dataset_i)$, this implies that $\dataset_j, \dataset_k \not\in \powerset(\dataset_i)$, a contradiction.

\end{proof}

\subsection{Theorem  \ref{THM:SPACE} (Space Complexity of A* MAP) Proof}
\label{prf:space}
\begin{proof}
Each vertex, $\trellisVertex_i$, in a trellis, $\trellis$, stores a dataset, $\dataset_i$, a min heap, $\dataset_i[\frontier]$, and a pointer to the children associated with the best split, $\dataset[\argmaxsplit]$.  Since $\dataset_i$ and each pointer in $\dataset[\argmaxsplit]$ can be represented as integers, they are stored in $O(1)$ space.  The min heap, $\dataset_i[\frontier]$, stores a five tuple, $(f_{LR},\ g_{LR},\ h_{LR}, \dataset_L,\ \dataset_R)$ for each of $\dataset_i$'s child pairs, $(\dataset_L, \dataset_R)$. Each five tuple can be stored in $O(1)$ space, so the min heap is stored in $O(|\children(\dataset_i)|)$ space, where $\children(\dataset_i)$ is $\dataset_i$'s children. Thus each vertex, $\trellisVertex_i$, takes $O(|\children(\dataset_i)|)$ space to store.  In a full trellis,$|\children(\dataset_i)| = 2 ^ {|\dataset_i| -1} - 1$, making the total space complexity $\sum_{\trellisVertex \in \trellis}2^{|\dataset_i|} = O(3^{|\dataset|})$. In a sparse trellis, the largest number of parent/child relationships occurs when the trellis is structured such that the root, $\trellisVertex_1$, is a parent of all the remaining $|\trellis| -1$ vertices in the trellis, the eldest child of the root, $\trellisVertex_2$, is a parent of all the remaining $|\trellis| -2$ vertices in the trellis, and so on, thus the space complexity of a sparse trellis is $\sum_{\trellisVertex_i \in \trellis}O(|\children(\dataset_i)|) <= \sum_{i = 1...|\trellis|}{(|\trellis| -i)} = O(|\trellis|^2)$.
\end{proof}

\subsection{Theorem \ref{THM:TIME} (Time Complexity of A* MAP) Proof}
\label{prf:time}
\begin{proof}
During each iteration of the loop beginning at line 4, the algorithm descends from the root down to the leaves of the partial hierarchical clustering with minimum energy, $\tree_\argmaxsplit(\dataset)$, such that the internal nodes have all been explored and the leaf nodes have not. Upon arriving at the leaves of the partial hierarchical clustering (line 11), each leaf node is explored, creating and populating a min heap stored at the node. Finally in line 19, every node in $\tree_\argmaxsplit(\dataset)$ updates its min heap.

Note that when running Algorithm \ref{alg:tree-astar}, each node in the trellis is explored at most once (a given node, $\trellisVertex_i$, is explored when $\trellisVertex_i \in lvs(\dataset[\argmaxsplit])$ at line 12).
If every node in a trellis is explored, $\tree_\argmaxsplit(\dataset) = \argmin_{\tree \in \alltrees(\dataset)}$, 
at which point Algorithm \ref{alg:tree-astar} computes $\tree_\argmaxsplit(\dataset)$ at line 6 in $O(\log n)$ time and halts at line 10.
Therefore, the time it takes Algorithm \ref{alg:tree-astar} to explore every node in trellis $\trellis$ gives an upper bound on the time it takes Algorithm \ref{alg:tree-astar} to find the tree with minimum energy hierarchical clustering encoded by $\trellis$. 

The time it takes Algorithm \ref{alg:tree-astar} to explore every $\trellisVertex_i \in \trellis$ is the sum of (1) the time $\trellisVertex_i$ contributes to computation of $\tree_\argmaxsplit(\dataset)$ at line 6 when $\trellisVertex_i \in lvs(\dataset[\argmaxsplit])$, (2) the time it takes to populate the min heap for each $\trellisVertex_i$ at line 13, and (3) the time $\trellisVertex_i$ contributes to updating the min heap of every node in $\tree_\argmaxsplit(\dataset)$ at line 19 when $\trellisVertex_i \in lvs(\dataset[\argmaxsplit])$.

(1) The time it takes for Algorithm \ref{alg:tree-astar} to compute $\tree_\argmaxsplit(\dataset)$ in a given iteration is the sum of the length of the root to leaf paths in $lvs(\dataset[\argmaxsplit])$, since reading \dataset[\argmaxsplit] takes $O(1)$ time. 
Therefore the amount of time $\trellisVertex_i$ contributes to computation of $\tree_\argmaxsplit(\dataset)$ is the length of the path in $\tree_\argmaxsplit(\dataset)$ from the root to $\trellisVertex_i$. 
In a full trellis, the maximum possible path length from the root to $\trellisVertex_i$ is $|\dataset| - |\dataset_i|$. Thus, the total amount of time spent computing $\tree_\argmaxsplit(\dataset)$ at line 6 in a full trellis when every vertex is explored is $\sum_{k=1...n}{\binom{n}{k}(n-k)} = \frac{1}{2} \frac{(2^{n}-2)}{n} = O(2^n)$. 
In a sparse trellis, the path lengths are maximized when the root, $\trellisVertex_1$, is a parent of all the remaining $|\trellis| -1$ vertices in the trellis, the eldest child of the root, $\trellisVertex_2$, is a parent of all the remaining $|\trellis| -2$ vertices in the trellis, and so on. 
Thus the sum of the total path lengths is bounded from above by $\sum_{i = 1...|\trellis|}{(|\trellis| -i)} = O(|\trellis|^2)$, and the total amount of time spent computing $\tree_\argmaxsplit(\dataset)$ at line 6 in a sparse trellis when every vertex is explored is $O(|\trellis|^2)$.

(2) The time it takes for Algorithm \ref{alg:tree-astar} to populate the min heap for $\trellisVertex_i$ is the amount of time it takes to compute $f_{LR}, g_{LR}$, $h_{LR}$ (lines 14-16) and to enqueue $(f_{LR}, g_{LR}, h_{LR}, \dataset_L, \dataset_R)$ onto the min heap (line 17) for all children, $\dataset_L, \dataset_R$.

        It is possible to compute $g_{LR}$ in $O(1)$ time, since the first term in equation \ref{eq:glr} is a value look up in the model, and second and third terms are the $g_{LR}$ values at $\dataset_L$ and $\dataset_R$, respectively, and are memoized at those nodes. It is possible to compute $f_{LR}$ in $O(1)$ time, since it is just the sum of two numbers, $g_{LR}$ and $h_{LR}$. The time it takes to compute $h_{LR}$ depends on the objective-specific heuristic being used. Note that $h_{LR}$ is the sum of a function of a single $\dataset_i$ in Equation \ref{eq:hlr}, i.e., 
\begin{align*}
    h_{LR} &= 
    \sum_{\dataset_\ell \in \textsf{lvs}(\tree_\argmaxsplit(\dataset_L \cup \dataset_R))} \Hfun(\dataset_\ell) \\
    &=
    \sum_{\dataset_\ell \in \textsf{lvs}(\tree_\argmaxsplit(\dataset_L))} \Hfun(\dataset_\ell) + 
    \sum_{\dataset_\ell \in \textsf{lvs}(\tree_\argmaxsplit(\dataset_R))} \Hfun(\dataset_\ell)
\end{align*}
so we compute $\Hfun(\dataset_i)$ once per node, memoize it, and compute $h_{LR}$ in $O(1)$ time given $\Hfun(\dataset_L)$ and  $\Hfun(\dataset_R)$ by summing the two values. 

The time complexity of creating a heap of all tuples is $O(|\children(|\trellisVertex_i|)$, where $|\children(\trellisVertex_i)|$ is the number of children $\trellisVertex_i$ has in trellis $\trellis$. 

Thus it takes $O(|\children(\trellisVertex_i)|) + O(|\Hfun(\dataset_i)|)$ time to create the min heap for $\trellisVertex_i$, where $O(|\Hfun(\dataset_i)|)$ is the time complexity for computing $\Hfun(\dataset_i)$.

Therefore, creating the min heaps for every node in a full trellis takes $\sum_{k=1...n}{\binom{n}{k} O(2^k)} = O(3^n)$ time, when $O(\Hfun(\dataset_i)) = O(2^{|\dataset_i|})$. 
In a sparse trellis, $\sum_{\trellisVertex_i \in \trellis}{O(|\children(\trellisVertex_i)|) + O(|\Hfun(\dataset_i)|)} = \sum_{\trellisVertex_i \in \trellis}{O(|\children(\trellisVertex_i)|)} + \sum_{\trellisVertex_i \in \trellis}{O(|\Hfun(\dataset_i)|)} = \sum_{i = 1...|\trellis|}{(|\trellis| -i)} = O(|\trellis|^2)$, when each trellis vertex, $\trellisVertex_i$, has the maximum possible number of children and  $O(\Hfun(\dataset_i)) = O({|\dataset_i|})$.

(3) When exploring node $\trellisVertex_i$, Algorithm \ref{alg:tree-astar} pops and enqueues an entry from the min heap of every node on the path from $\trellisVertex_i$ to the root in $\tree_\argmaxsplit(\dataset)$, which takes $O(log(|\children(\trellisVertex_k)|)$ for each ancestor $\trellisVertex_k$ in the path.
In a full trellis, the maximum possible path length from the root to $\trellisVertex_i$ is $|\dataset| - |\dataset_i|$.
Thus, in the worst case it takes Algorithm \ref{alg:tree-astar} $\sum_{j=||\dataset_i|...n}{O(log(2^j))}$ time to update the min heaps when $\trellisVertex_i$ is explored.
Therefore, the total amount of time spent updating the min heaps in $\tree_\argmaxsplit(\dataset)$ at line 19 in a full trellis when every vertex is explored is $\sum_{k=1...n}{\binom{n}{k}\sum_{j=k...n}{log(2^k)}} = O(n^2 2^n)$. %$\sum_{k=1...n}{\binom{n}{k}(n-k)log(n-k)} = \frac{1}{2} \frac{(2^{n}-2)}{n} = O(2^n)$. 
In a sparse trellis, the path lengths are maximized when the root, $\trellisVertex_1$, is a parent of all the remaining $|\trellis| -1$ vertices in the trellis, the eldest child of the root, $\trellisVertex_2$, is a parent of all the remaining $|\trellis| -2$ vertices in the trellis, and so on. 
Thus the sum of the total path lengths is bounded from above by $\sum_{i=1...|\trellis|}{\sum_{j=i...|\trellis|}{log(j)}} = O(log(|\trellis|^3))$

Therefore, the time complexity of running Algorithm \ref{alg:tree-astar} on a full trellis is $O(3^n)$, and 
the time complexity of running Algorithm \ref{alg:tree-astar} on a full trellis is $O(|\trellis|^2)$.

\end{proof}

\subsection{Theorem  \ref{THM:OPT_EFFICIENCY} (Optimal Efficiency of A* MAP) Proof}
\label{prf:opt_efficiency}
\begin{proof}
We define a mapping between paths in the trellis representing the state space and the standard space of tree structures via $\tree_\argmaxsplit(\dataset)$ (Eq. \ref{eq:tree_from_trellis}). This mapping is bijective. We have describe a method to find the neighboring states of $\tree_\argmaxsplit(\dataset)$ in the standard space of trees using the trellis in Algorithm \ref{alg:tree-astar} ($\rhd \textsf{Explore new leaves}$). Given the additive nature of the cost functions in the family of costs (Eq. \ref{defn:energy_based_hclustering}), we can maintain the splits/merges in the aforementioned nested min heap in the trellis structure and have parent nodes' $f$ values be computed from their child's min heaps. The result is that Algorithm \ref{alg:tree-astar} exactly follows \astar's best-first according to $f$ with an admissible heuristic in search over the entire space of tree structures.
Therefore the optimal efficiency of Algorithm \ref{alg:tree-astar} follows as a result of the optimal efficiency of the A* algorithm.
\end{proof}

\subsection{Proposition  \ref{THM:HCC_HEURISTIC1} (Admissibility of Hierarchical Correlation Clustering Cost Heuristic) Proof}
\label{prf:hcc-heuristic1}
\begin{proof}
We wish to prove that $\Hfun_{hcc1}(\dataset) <= \argmin_{\tree \in \alltrees(\dataset)}{\EfunT_{hcc}(\tree)}$.
Note that in every hierarchical clustering, every element is eventually separated (at the leaves), thus every edge eventually crosses a cut in every tree, thus $\forall \tree \in \alltrees(\dataset)$, 
\begin{equation}
    \sum_{X_L,X_R \in \textsf{sibs}(\tree)} \quad\;\; \,\,\,\,\,\,\,
         \sum_{\mathclap{\substack{x_i, x_j \in X_L \times X_R}}} w_{ij} \II[w_{ij} > 0] 
        = \sum_{\mathclap{\substack{x_i, x_j \in X}}} w_{ij} \II[w_{ij} > 0]
\end{equation}

Since \[\forall \dataset, \sum_{\mathclap{\substack{x_i,x_j \in X \times X, \\ i < j}}} |w_{ij}| \II[w_{ij} < 0] >= 0\]

We have

\begin{multline*}
      \EfunT(\tree) = \sum_{X_L,X_R \in \textsf{sibs}(\tree)} \EfunS(X_L,X_R) \\
         = \sum_{X_L,X_R \in \textsf{sibs}(\tree)} \quad\;\;\;
         (\sum_{\mathclap{\substack{x_i, x_j \in X_L \times X_R}}} w_{ij} \II[w_{ij} > 0]  + 
        \sum_{\mathclap{\substack{x_i,x_j \in X_L \times X_L, \\ i < j}}} |w_{ij}| \II[w_{ij} < 0] 
        + \sum_{\mathclap{\substack{x_i,x_j \in X_R \times X_R, \\ i < j}}} |w_{ij}| \II[w_{ij} < 0]) \\
         >= \sum_{X_L,X_R \in \textsf{sibs}(\tree)}\quad\;\;\;
         \sum_{\mathclap{\substack{x_i, x_j \in X_L \times X_R}}} w_{ij} \II[w_{ij} > 0] \\
        = \sum_{\mathclap{\substack{x_i, x_j \in X}}} w_{ij} \II[w_{ij} > 0]
        = \Hfun_{hcc1}(\dataset)
\end{multline*}

\end{proof}

\subsection{Heuristic Functions for Ginkgo Jets}\label{app:ginkgoHeuristics}

We want to find heuristic functions to set an upper bound on the likelihood of Ginkgo hierarchies (for more details about Ginkgo see  \cite{ToyJetsShower}). We split the heuristic into internal nodes and leaves.

\subsubsection{ Internal nodes}
We want to find the smallest possible upper bound to 
\bea\label{inner_lh}
 f(t | \lambda, t_{\text{P}})=\frac{1}{1-e^{- \lambda}} \frac{\lambda}{t_{\text{P}}} e^{- \frac{\lambda}{ t_{\text{P}}} t} 
\eea
with $\lambda$ a constant.
We first focus on getting an upper bound for $ t_{\text{P}}$ in the exponential. The upper bound for all the parent masses is given by the squared mass of the root vertex, $t_{root}$ (top of the tree). 

Next, we look for the biggest lower bound on the squared mass $t$ in the exponential of Equation \ref{inner_lh}.
We consider a fully unbalanced tree as the topology with the smallest per level values of $t$ and we bound each level in a bottom up approach (singletons are at the bottom).  For each internal node,  $t$ has to be greater than $t_{cut}$ ($t_{cut}$ is a threshold below which binary splittings stop in Ginkgo). Thus, for each element, we find the smallest parent invariant squared mass  $t_{\text{Pi}}$ that is above the threshold $t_{cut}$, else set this value to $t_{cut}$ and save it to a list named $t_{min}$, that we sort.
Maximizing the number of nodes per level (having a full binary tree) minimizes the values of $t$. We start at the first level, taking the first $N//2$ entries of $t_{min}$ and adding them to our candidate minimum list, $t_{\text{bound}}$  ($N$ is the number of elements to cluster).
For the second level, we take $t_p^0$ as the minimum value in $t_{min}$ that is greater than $t_{cut}$ and bound $t$ by 
$[\tilde{t} (i\%2)+ t_p^0(i//2)]  (j// 2)$ with  $i = 3$ and $j = N \% 2 + N // 2$, where $\tilde{t}$ is the minimum invariant squared mass among all the leaves. The reason is that $(j// 2)$ bounds the total possible number of nodes at current level and $i$ the minimum number of elements of the tree branch below a given vertex at current level.
We calculate this bound level by level, maximizing the possible number of nodes at each level. This is described in Algorithm \ref{alg:t-lower-bound}.
\begin{algorithm}
 \caption{Lower Bound on $t$ in Equation \ref{inner_lh}}
 \label{alg:t-lower-bound}
 \begin{algorithmic}[1]
\Function{LowerBound}{$t_i$, $t_{min}$, $N$}
    \State \textbf{Input: } Elements invariant mass squared: $t_i$, minimum value in $t_{min}$ that is greater than $t_{cut}$: $t_p^0$, list with minimum parent mass squared above $t_{cut}$ for each element: $t_{min}$, and number of elements: $N$.
    \State \textbf{Output: } List that bounds $t$: $t_{bound}$.
    \State \textsf{$\rhd$ Set initial variables}
    \State  $m^2= sort([t_i \text{ for i in N}])$
    \State $t_{bound} = sort(t_{min})[0:N // 2]$
    \State $i = 3$
    \State $j = N \% 2 + N // 2$
    \Do
        \State $t_{bound} += [m^2[0] (i\%2)+ t_p^0(i//2)]  (j// 2))$
        \State $j =  j \% 2 + j // 2 $
        \State $i += 1$
    \doWhile{$len(t_{bound}) < N - 2$}
    \EndFunction
  \end{algorithmic}
\end{algorithm}

Finally, to get a bound for $t_{\text{P}}$ (denoted as $\tilde{t}_P$) in the denominator of Equation \ref{inner_lh} we want the minimum possible values. Here we consider two options for a heuristic:
\begin{itemize}
    \item Admissible heuristic: Thus we take $\tilde{t}_P = t_{bound}+ \tilde{t}$ (given that the parent of any internal node contains at least one more element) except when $t_{bound} < \text{max}\{t_i\}_{i=0}^N$ where we just keep $t_{\text{P}} =t_{bound}$ (see \cite{ToyJetsShower} for more details).
    
    \item h1: $\tilde{t}_P = t_{bound}+ 2 \,\,t_p^0$. Even though we do not have a proof for h1 to be admissible, we gained confidence about it from checking it on a dataset of 5000 Ginkgo jets with up to 9 elements (compared to the exact trellis solution for the MAP tree), as well as the fact that exact \astar with h1 agrees with the exact trellis within 6 significant figures (see Figure \ref{fig:ginkgoCost}).
    
\end{itemize}

Putting it all together, the upper bound for Equation \ref{inner_lh} is
\bea
\sum \log f^{bound} = [-\log(1 - e^{- \lambda}) + \log(\lambda)] \nonumber  - \sum_{i=0}^{N-2} \bigg(\log(\tilde{t}_P ) + \lambda * \frac{t_{bound}^i }{ t_{root}} \bigg)
\eea

\subsubsection{ Leaves}
We want to find the smallest possible upper bound to 
\bea\label{outer_lh}
 f(t | \lambda, t_{\text{P}})=\frac{1}{1-e^{- \lambda}} \bigg(1-e^{- \frac{\lambda}{ t_{\text{P}}} t_{\text{cut}}}\bigg) 
\eea
with $\lambda$ and $t_{cut}$ constants.

We want the maximum possible value of  $t_{\text{P}}$ while still getting an upper bound in  Equation \ref{outer_lh}.
A parent  $t_{\text{P}}$ consists of a pairing of two elements (leaves). We find and sort the same list $t_{min}$ as we did for the internal nodes. 
We do not know the order in which the two children are sampled, and the minimum value happens when each child is sampled last, $t_{\text{Pi}} \rightarrow (\sqrt{t_{\text{Pi}}} - \sqrt{t_i})^2$  (see \cite{ToyJetsShower} for more details). We take the minimum value greater than $t_{cut}$ in $t_{min}$ and refer to it as $t^0_{\text{Pi}}$. Thus we get  $t_{\text{Pi}} =  (\sqrt{t^0_{\text{Pi}}} - \sqrt{t_i})^2$. For the element with greatest invariant squared mass $t_i$ we just keep $t_{\text{Pi}} =t^0_{\text{Pi}}$.
Finally, sum the likelihood over each element $i$.

\subsection{Dasgupta's Cost}
\label{app:dasgupta}

\begin{objective}
\label{ex:dasgupta_cost}
\textbf{\emph{(Dasgupta's Cost)\cite{dasgupta2016cost} }}
Given a graph with vertices of the dataset $\dataset$ and weighted edges representing pairwise similarities between points $\mathcal{W} = \{(i,j,w_{ij}) | i,j \in \{1,..., |\dataset|\} \times \{1,..., |\dataset|\}, i< j, w_{ij} \in \RR^{+}\}$. Dasgupta's cost is defined as:
\begin{equation}
\EfunS(X_i,X_j) = (|X_i| + |X_j|) \sum_{x_i, x_j \in X_i \times X_j} w_{ij} 
\label{eq:dasgupta_potential}
\end{equation}
This is equivalent to the cut-cost definition of Dasgupta's cost with the restriction to binary trees \cite{dasgupta2016cost}. 
\end{objective}

We are given the similarity graph, $\dasguptaGraph(\dataset)$, where the nodes
of the graph refer to the dataset $\dataset$ and edges (denoted $E_{\dasguptaGraph(\dataset)}$), $w_{i,j}$ give similarity between points $x_i, x_j \in \dataset$. Given a subset $\dataset' \subseteq \dataset$, we hope to design a heuristic $h(\cdot)$ such that $h(\dataset') \leq \min_{\tree \in \alltrees(\dataset')} J(\tree)$. 

We can think about Dasgupta's cost as a sum over weighted edges in the graph $E_{\dasguptaGraph(\dataset)}$ with each edge's similarity multiplied by the number of descendant nodes. This leads to a heuristic in which the number of descendants is lower-bounded by having a single descendant:
\begin{align}\label{eq:naive_dasgupta}
    h_\textsf{d1}(\dataset) = \sum_{(i,j) \in E_{\dasguptaGraph(\dataset)}} w_{i,j}
\end{align}

\begin{proposition}{\textbf{\emph{(Admissible Heuristic for Dasgupta)}}}
Equation \ref{eq:naive_dasgupta} is an admissible heuristic for Dasgupta's cost, that is $h_\textsf{d1}(\dataset) \leq J_\textsf{Dasgupta}(\dataset)$.
\end{proposition}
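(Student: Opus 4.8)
The plan is to show that for every tree $\tree \in \alltrees(\dataset')$ the cost $J(\tree)$ is at least $h_\textsf{d1}(\dataset') = \sum_{(i,j) \in E_{\dasguptaGraph(\dataset')}} w_{i,j}$, and then conclude by taking the minimum over $\tree$. The key observation, which mirrors the argument already given for hierarchical correlation clustering in the proof of Proposition~\ref{THM:HCC_HEURISTIC1}, is that in any hierarchical clustering every pair of distinct points $x_i, x_j$ is eventually separated: there is exactly one sibling pair $(X_L, X_R) \in \textsf{sibs}(\tree)$ with $x_i \in X_L$ and $x_j \in X_R$ (or vice versa). This gives a partition of the edge set $E_{\dasguptaGraph(\dataset')}$ indexed by sibling pairs, so that $\sum_{X_L, X_R \in \textsf{sibs}(\tree)} \sum_{x_i, x_j \in X_L \times X_R} w_{ij} = \sum_{(i,j) \in E_{\dasguptaGraph(\dataset')}} w_{ij}$.

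First I would write out $J(\tree) = \sum_{X_L, X_R \in \textsf{sibs}(\tree)} (|X_L| + |X_R|) \sum_{x_i, x_j \in X_L \times X_R} w_{ij}$ using Definition~\ref{defn:energy_based_hclustering} and Equation~\ref{eq:dasgupta_potential}. Next I would use the fact that $|X_L| + |X_R| \geq 2 \geq 1$ for any sibling pair (each $X_L, X_R$ is nonempty, so in fact $|X_L| + |X_R| \geq 2$, but $\geq 1$ suffices), together with $w_{ij} \in \RR^{+}$, to bound each summand from below by $\sum_{x_i, x_j \in X_L \times X_R} w_{ij}$. Then I would invoke the ``every edge crosses exactly one cut'' identity above to collapse the resulting double sum to $\sum_{(i,j) \in E_{\dasguptaGraph(\dataset')}} w_{ij} = h_\textsf{d1}(\dataset')$. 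Since this holds for every $\tree \in \alltrees(\dataset')$, it holds for the minimizer, giving $h_\textsf{d1}(\dataset') \leq \min_{\tree \in \alltrees(\dataset')} J(\tree)$.

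The only mildly delicate point — and the place I would be most careful — is justifying the edge-partition identity: I would note that for distinct $x_i, x_j \in \dataset'$, the collection of clusters in $\tree$ containing both of them is a chain (by the laminar/nesting property in Definition~\ref{defn:hclustering}) with a unique minimal element $X_i$, and the two children of $X_i$ separate $x_i$ from $x_j$; conversely distinct sibling pairs separate disjoint sets of point-pairs because the sets involved are disjoint. This is exactly the structural fact used implicitly in the correlation-clustering proof, so I would state it once and reuse it. Everything else is a one-line monotonicity bound using nonnegativity of the weights, so I do not anticipate any real obstacle beyond stating this cleanly.
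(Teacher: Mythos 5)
Your proposal is correct and follows essentially the same route as the paper's proof: expand the cost over sibling pairs, drop the $(|X_L|+|X_R|)\geq 1$ multiplier using nonnegativity of the weights, and collapse the remaining double sum via the fact that every edge crosses exactly one cut. Your explicit justification of that edge-partition identity through the laminar structure of the tree is a welcome addition, since the paper asserts it without proof, but it does not change the substance of the argument.
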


\begin{proof}
We wish to prove that $\Hfun_{dasgupta}(\dataset) <= \argmin_{\tree \in \alltrees(\dataset)}{\EfunT_{dasgupta}(\tree)}$.
\end{proof}
Note that in every hierarchical clustering, every element is eventually separated (at the leaves), thus every edge eventually crosses a cut in every tree, thus $\forall \tree \in \alltrees(\dataset)$, 
\begin{align}
    \sum_{X_L,X_R \in \textsf{sibs}(\tree)} \quad \,\,\,\,\,\,\,\,\,
        %(\sum_{\mathclap{\substack{x_i,x_j \in X_L \times X_L, \\ i < j}}} |w_{ij}| \II[w_{ij} < 0] +
        %\sum_{\mathclap{\substack{x_i,x_j \in X_R \times X_R, \\ i < j}}} |w_{ij}| \II[w_{ij} < 0])
         \sum_{\mathclap{\substack{x_i, x_j \in X_L \times X_R}}} w_{ij} 
        = \,\,\,\,\,\,\,\,\,\sum_{\mathclap{\substack{x_i, x_j \in X}}} w_{ij} 
\end{align}

Since \[\forall \dataset, |\dataset| \in \mathbb{Z}^+ \]

We have

\begin{multline*}
      \EfunT(\tree) = \sum_{X_L,X_R \in \textsf{sibs}(\tree)} \EfunS(X_L,X_R) \\
         = \sum_{X_L,X_R \in \textsf{sibs}(\tree)} \quad\;\;\;
         ((|X_L| + |X_R|) \sum_{x_i, x_j \in X_L \times X_R} w_{ij} ) \\
         >= \sum_{X_L,X_R \in \textsf{sibs}(\tree)}\quad\;\;\;
         \sum_{\mathclap{\substack{x_i, x_j \in X_L \times X_R}}} w_{ij} \\
        = \sum_{\mathclap{\substack{x_i, x_j \in X}}} w_{ij}
        = \Hfun_{Dasgupta}(\dataset)
\end{multline*}

\end{document}